\theoremstyle{plain}
\newtheorem{theorem}{Theorem}[section]
\newtheorem{proposition}[theorem]{Proposition}
\newtheorem{lemma}[theorem]{Lemma}
\newtheorem{corollary}[theorem]{Corollary}
\theoremstyle{definition}
\newtheorem{definition}[theorem]{Definition}
\theoremstyle{remark}
\newacro{CSP}{Constraint-Satisfaction Problem}
\newacro{SAT}{Boolean Satisfiability}
\newacro{CLS}{Continuous Local Search}
\newacro{CP}{Constraint Programming}
\newacro{CDCL}{Conflict-Driven Clause Learning}
\newacro{ILP}{Integer Linear Programming}
\newacro{CNF}{Conjunctive Normal Form}
\newacro{ML}{Machine Learning}
\newacro{BDD}{Binary Decision Diagram}
\newacro{WFE}{Walsh-Fourier Expansion}
\newacro{COP}{Circuit-Output Probability}
\newacro{PGD}{Projected Gradient Descent}
\newacro{MD}{Mirror Descent}
\newacro{HD}{Hybrid Descent}
\newacro{FISTA}{Fast Iterative Shrinkage-Thresholding Algorithm}
\DeclareMathOperator*{\argmin}{arg\,min}
\lstdefinestyle{mystyle}{
    basicstyle=\ttfamily\small,
    keywordstyle=\color{blue}\bfseries,
    commentstyle=\color{gray},
    stringstyle=\color{orange},
    numberstyle=\tiny\color{gray},
    numbers=left,
    numbersep=5pt,
    backgroundcolor=\color{white},
    frame=single,
    breaklines=true,
    tabsize=2,
    captionpos=b,
    showstringspaces=false,
    aboveskip=0.8em,
    belowskip=0.8em,
}
\icmltitlerunning{\textsc{FourierCSP}: Differentiable Constraint Satisfaction Problem Solving by Walsh-Fourier Expansion}
\begin{document}

\twocolumn[
  \icmltitle{\textsc{FourierCSP}: Differentiable Constraint Satisfaction Problem Solving by Walsh-Fourier Expansion}



    \begin{icmlauthorlist}
        \icmlauthor{Yunuo Cen}{nus}
        \icmlauthor{Zixuan Wang}{nus}
        \icmlauthor{Jintao Zhang}{nus}
        \icmlauthor{Zhiwei Zhang}{rice}
        \icmlauthor{Xuanyao Fong}{nus}
    \end{icmlauthorlist}
    
    \icmlaffiliation{nus}{Department of Electrical and Computer Engineering, National University of Singapore, Singapore}
    \icmlaffiliation{rice}{Department of Computer Science, Rice University, Houston, Texas, United States}

    \icmlcorrespondingauthor{Xuanyao Fong}{kelvin.xy.fong@nus.edu.sg}

  \icmlkeywords{Machine Learning, ICML}

  \vskip 0.3in
]



\printAffiliationsAndNotice{}  

\begin{abstract}
    The Constraint-Satisfaction Problem (CSP) is fundamental in mathematics, physics, and theoretical computer science.  
    Continuous Local Search (CLS) solvers, as recent advancements, can achieve highly competitive results on certain classes of Boolean Satisfiability (SAT) problems. 
    Motivated by these advances, we extend the {CLS} framework from Boolean SAT to general {CSP} with finite-domain variables and expressive constraint formulations.
    We present \textsc{FourierCSP}, a continuous optimization framework that generalizes the Walsh-Fourier transform to {CSP}, allowing for transforming versatile constraints to compact multilinear polynomials, thereby avoiding the need for auxiliary variables and memory-intensive encodings.
    We employ projected subgradient and mirror descent algorithms with provable convergence guarantees, and further combine them to accelerate gradient-based optimization.
    Empirical results on benchmark suites demonstrate that \textsc{FourierCSP} is scalable and competitive, significantly broadening the class of problems that can be efficiently solved by differentiable {CLS} techniques and paving the way toward end-to-end neurosymbolic integration.
\end{abstract}

\section{Introduction}
The \ac{CSP} is foundational in mathematics, physics, and theoretical computer science.
Formally, a \ac{CSP} consists of variables defined over finite domains and a set of constraints restricting their values; the objective is to identify an assignment that satisfies all constraints
Applications of \ac{CSP} solving span diverse fields, including artificial intelligence~\cite{baluta2019quantitative}, database query~\cite{kolaitis1998conjunctive}, operational research~\cite{brailsford1999constraint}, statistical physics~\cite{nagy2024ising}, information theory~\cite{golia2022scalable}, and quantum computing~\cite{vardi2022quantum}.
Despite their significant importance, \ac{CSP} inherently exhibits a combinatorial nature and is generally NP-complete, presenting a substantial challenge in solving it efficiently.
Consequently, extensive research and development efforts have been dedicated to advancing \ac{CSP} solving~\cite{schulte2006gecode,hebrardmistral,kuchcinski2013jacop,Prud'homme2022,perron_et_al:LIPIcs.CP.2023.3,krupke2024cpsat}. 

Although classical \ac{CSP} engines~\cite{schulte2006gecode, hebrardmistral, kuchcinski2013jacop, Prud'homme2022} are powerful, their reliance on backtracking and discrete propagation creates a fundamental barrier for deep learning integration.
The lack of differentiability precludes backpropagation, hindering the embedding of constraint reasoning into neural architectures.
Recent hybrid approaches have attempted to bridge this gap by learning heuristics for branching and cut selection~\cite{khalil2016learning, zhang2022learning, ling2024learning, vo2025learning, zeng2025beyond, gasse2019exact}.
However, these methods merely augment the discrete search process and do not yield differentiable solvers.

In parallel, \ac{CLS} approaches for \ac{SAT} have demonstrated that gradient-based optimization on smooth multilinear relaxations can yield highly effective differentiable solvers~\cite{kyrillidis2020fouriersat, kyrillidis2021continuous}.
Yet, these formulations rely on the algebraic properties of Boolean variables $\{\pm1\}$, making them inapplicable to the multi-valued domains and rich constraints found in general \ac{CSP}s.
Existing differentiable \ac{CSP} techniques generally rely on ad-hoc heuristics or specific constraint types~\cite{xu2018towards, goyal2024deepsade, xu2025selfsupervised}, lacking a unified formulation for general finite-domain constraints.

This gap necessitates a differentiable framework that: (i) operates directly on finite-domain variables, avoiding the dimensionality explosion of Boolean encodings; (ii) can formulate versatile constraints; and (iii) ensures the continuous relaxation faithfully captures the structure of the discrete feasibility region.
To achieve this, we utilize the Walsh-Fourier basis, which provides an orthogonal multilinear expansion that yields smooth optimization landscapes, ideal for projected subgradient and mirror-descent methods.
While Fourier-based relaxations have proven effective for Boolean \ac{SAT}~\cite{kyrillidis2020fouriersat, kyrillidis2021continuous}, in this work, we generalize the Walsh-Fourier transform to the finite-domain setting. 
This provides a principled algebraic foundation for differentiable \ac{CSP} solving, extending the capabilities of \ac{CLS} beyond the Boolean domain.
The contribution of this paper can be summarized as follows:

\textbf{A Uniform Differentiable \ac{CSP} Framework}: 
We provide a rigorous algebraic generalization of the Walsh-Fourier transform for variables with finite domains.
The proposed framework is called \textsc{FourierCSP}. 
It established a principled basis for differentiable constraint solving.

\textbf{Provable Optimization Algorithm}:
In addition to the \textsc{FourierCSP} framework, we develop specialized variants of projected subgradient and mirror-descent methods tailored to this setting. 
We provide rigorous convergence guarantees, ensuring the reliability of the continuous search.

\textbf{Efficient Implementation and Empirical Evaluation}: 
We conduct extensive evaluations on standard benchmark suites. Our results demonstrate that \textsc{FourierCSP} is highly scalable and competitive with existing baselines, significantly broadening the scope of problems solvable by differentiable techniques.

\section{Preliminaries}
\paragraph{Constraint Satisfaction Problem}
Let $X=\{x_1, ..., x_n\}$ be a set of variables, and $D=\{\mathrm{dom}(x_1), ..., \mathrm{dom}(x_n)\}$ be a set of finite domains, and $C=\{c_1, ..., c_m\}$ be a set of constraints.
Every constraint $c_i\in{}C$ is described by an atom set $A_i\subseteq{}X$.
The (finite-domain) CSP is to find an assignment of $X$ over $D$ such that a finite-domain formula, \emph{i.e.}, the conjunction of all constraints, $F=\bigwedge_{c_i\in{}C}c_i$, is satisfiable.

\paragraph{Walsh-Fourier Transform}
We define a function with finite domain by $f:D\to\{0,1\}$, where $0$ and $1$ present $\texttt{false}$, $\texttt{true}$ of the finite constraint.

The Walsh-Fourier Transform is a method for transforming a function with a finite domain into a multilinear polynomial.
Given the Fourier bases, any function defined on a finite domain has an equivalent Fourier representation.

\begin{definition}[Fourier Bases]\label{def:bases}
    Given a function on a finite domain $f:D \to \{0,1\}$, we define the Fourier bases $\phi_{\alpha}$ by
    \begin{equation*}
        \phi_{\alpha}(X)=\prod_{i=1}^n\phi_{\alpha_i}(x_i), 
    \end{equation*}
    where $\alpha_i\in\mathrm{dom}(x_i)$, and $\phi_{\alpha_i}$ is an indicator basis, \emph{i.e.}, $\phi_{\alpha_i}(x_i)$ equals to $1$ if $\alpha_i=x_i$, otherwise equals to $0$.
\end{definition}

\begin{theorem}[Walsh-Fourier Expansion~\cite{o2014analysis}]\label{thm:WFE}
    Having fixed the Fourier bases, $f$ is uniquely expressible as 
    \begin{equation*}
        f(X)=\sum_{\alpha\in{}D}\left(
            \hat{f}(\alpha)\cdot\phi_{\alpha}(X)
        \right)
    \end{equation*}
    where $\hat{f}(\alpha) \in \mathbb{R}$ is called Fourier coefficient, and it satisfies
    \begin{equation*}
        \hat{f}(\alpha)
        =\mathop{\mathbb{E}}_{X\sim D}\left[
            f(X)\cdot\phi_{\alpha}(X)
        \right].
    \end{equation*}
\end{theorem}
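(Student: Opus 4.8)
The plan is to follow the standard route for orthonormal expansions: establish that the product family $\{\phi_\alpha\}_{\alpha\in D}$ is orthonormal with respect to the inner product $\langle g,h\rangle=\mathbb{E}_{X\sim D}[\,g(X)h(X)\,]$, and then read off both the existence and uniqueness of the representation and the closed form of the coefficients from that orthonormality. No machinery beyond linear algebra over the finite-dimensional function space $\mathbb{R}^D$ is needed.

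First I would prove orthonormality. Since $X\sim D$ samples the coordinates independently and each basis function factorizes as $\phi_\alpha(X)=\prod_{i=1}^n\phi_{\alpha_i}(x_i)$, the inner product factorizes as well:
\[
\mathbb{E}_{X\sim D}\bigl[\phi_\alpha(X)\phi_\beta(X)\bigr]
=\prod_{i=1}^n\mathbb{E}_{x_i}\bigl[\phi_{\alpha_i}(x_i)\phi_{\beta_i}(x_i)\bigr].
\]
For a single coordinate, $\phi_{\alpha_i}\phi_{\beta_i}$ is supported on $\{x_i=\alpha_i=\beta_i\}$, so each factor is zero unless $\alpha_i=\beta_i$ and a fixed positive constant on the diagonal; fixing the normalization of the single-variable indicator bases so that this constant equals $1$ yields $\mathbb{E}_{X\sim D}[\phi_\alpha\phi_\beta]=\delta_{\alpha\beta}$. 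This normalization step is where I expect the real work to lie, because it is exactly the point at which the measure on $D$ and the scaling of the indicators must be reconciled so that the product distribution delivers unit norm; once the orthonormality relation is in hand, the rest is formal.

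With orthonormality established, existence and uniqueness follow from a dimension count: the space of real functions on $D$ has dimension $|D|=\prod_i|\mathrm{dom}(x_i)|$, and an orthonormal set of exactly $|D|$ vectors is automatically a basis, so every $f$ has a unique expansion $f=\sum_{\alpha\in D}\hat{f}(\alpha)\phi_\alpha$. To obtain the coefficient formula I would pair both sides of this expansion with $\phi_\alpha$ under the expectation and use linearity together with orthonormality:
\[
\mathbb{E}_{X\sim D}\bigl[f(X)\phi_\alpha(X)\bigr]
=\sum_{\beta\in D}\hat{f}(\beta)\,\mathbb{E}_{X\sim D}\bigl[\phi_\beta(X)\phi_\alpha(X)\bigr]
=\sum_{\beta\in D}\hat{f}(\beta)\,\delta_{\alpha\beta}
=\hat{f}(\alpha),
\]
which is the asserted identity. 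As an independent check that the expansion is correct, evaluating it at any point $X=\beta\in D$ collapses the sum through $\phi_\alpha(\beta)=\delta_{\alpha\beta}$ to give $f(\beta)=\hat{f}(\beta)$, confirming that the coefficients coincide with the function's own values and that no other choice can reproduce $f$.
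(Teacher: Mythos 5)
The paper never proves this statement---it is imported wholesale from the cited source---so your proposal can only be judged against the standard argument, whose orthonormal-expansion skeleton you correctly reproduce. The problem is that your two main ingredients are mutually incompatible under any single normalization of the bases, and this is a genuine gap rather than a loose end. With the unscaled indicators of Definition~1, a single coordinate gives $\mathbb{E}_{x_i}\left[\phi_{\alpha_i}(x_i)\phi_{\beta_i}(x_i)\right]=\delta_{\alpha_i\beta_i}\cdot\mathbb{P}[x_i=\alpha_i]$, which under the uniform distribution is $\delta_{\alpha_i\beta_i}/|\mathrm{dom}(x_i)|$, not $\delta_{\alpha_i\beta_i}$: the family is orthogonal but \emph{not} orthonormal in the expectation inner product, and your pairing computation actually yields $\mathbb{E}_{X\sim D}[f(X)\phi_\alpha(X)]=\hat{f}(\alpha)/|D|$, off by the factor $|D|=\prod_i|\mathrm{dom}(x_i)|$. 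If instead you rescale each $\phi_{\alpha_i}$ by $\sqrt{|\mathrm{dom}(x_i)|}$ to force unit norm---the fix you gesture at---then orthonormality holds, but now $\phi_\alpha(\beta)=\sqrt{|D|}\,\delta_{\alpha\beta}$, so your closing ``independent check'' $f(\beta)=\hat{f}(\beta)$ fails: the coefficients become $f(\beta)/\sqrt{|D|}$. You explicitly flagged the normalization as ``where the real work lies,'' but the proof as written then uses both incompatible conventions at once, orthonormality in the middle and the Kronecker-delta evaluation at the end.

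What actually closes the argument for Definition~1's indicators is your final check run forwards, with no inner-product machinery at all: for each $X\in D$ exactly one basis function is nonzero, namely $\phi_X$, so $f=\sum_{\alpha\in D}f(\alpha)\phi_\alpha$ holds pointwise with $\hat{f}(\alpha)=f(\alpha)$, and uniqueness follows by evaluating any putative expansion at $\beta$. The theorem's displayed coefficient formula is then consistent only if $\mathbb{E}_{X\sim D}$ is read as the unnormalized sum over $D$ (counting measure)---and this is evidently the convention the paper intends, since Theorem~2 and Corollary~1 equate $\sum_\alpha\hat{f}(\alpha)\prod_i p_{i,\alpha_i}$ with $\texttt{COP}_c(P)=\sum_{X\in D}f_c(X)\prod_i p_{i,x_i}$, which forces $\hat{f}=f$. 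Under a genuine normalized-expectation reading the stated formula is off by $|D|$; this is a slip in the statement itself, but your orthonormality route inherits it rather than resolving it, whereas the elementary pointwise argument both proves the theorem and makes the required reading of $\mathbb{E}_{X\sim D}$ explicit.
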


\section{Theoretical Framework}
In this section, we first recap the previous \ac{CLS} for Boolean SAT solving.
We then demonstrate how our proposed solver, \textsc{FourierCSP}, can extend the CLS framework to general constraint satisfaction problems. 
Proofs are delayed to Appendix~\ref{sec:proof}.

\subsection{Continuous Local Search}

\ac{CLS}-based solvers define a continuous objective function, the optima of which encode the solutions to the original problem defined on discrete variables. 
The \ac{CLS} approach for SAT solving was initially proposed in \textsc{UniSAT}~\cite{gu1994global}.
Modern variants, like \textsc{FourierSAT}~\cite{kyrillidis2020fouriersat}, \textsc{GradSAT}~\cite{kyrillidis2021continuous} transform Boolean constraints to continuous functions, called \ac{WFE} via the Walsh-Fourier transform~\cite{o2014analysis}.
\textsc{FastFourierSAT} is the parallel version of \textsc{FourierSAT} and achieve more than 100$\times$ speedup in gradient computation~\cite{cen2025massively}.
These approaches solve the following objective, and the algorithmic framework is described in Algorithm~\ref{alg:cls}.
\begin{definition}[Objective]\label{def:obj}
    Given a formula $F=\bigwedge_{c\in{}C}c$, every constraint $c\in{}C$ can be transformed to a function by Theorem~\ref{thm:WFE}.
    We denote the corresponding expansion as $f_c$ and the objective function w.r.t. $F$ by 
    \begin{equation*}
        \mathcal{F} = -\sum_{c\in{}C}f_c.
    \end{equation*}
\end{definition}

\begin{algorithm}[tb]
    \caption{Continuous Local Search}
    \label{alg:cls}
    \textbf{Input}: A formula $F$ with a constraint set $C$\\
    \textbf{Output}: A discrete assignment $X\in D$
    
    \begin{algorithmic}[1]
        \STATE Sample $P$ from $\tilde{D}$
        \FOR {$j = 1, \cdots, \texttt{max\_iter}$}
            \STATE Search for a local optimum $P^*$ of $\mathcal{F}(P)$
            \STATE Discretized assignment: $X=\mathcal{R}(P^*)$
            \IF {$\mathcal{F}(X)=-|C|$}
                \STATE \textbf{return} $X$
            \ENDIF 
        \ENDFOR
        \STATE \textbf{return} $X$ with the highest $\mathcal{F}(X)$
    \end{algorithmic}
\end{algorithm}

The entire lineage of \ac{CLS} solvers, from the 1990s approaches~\cite{gu1994global} to modern variants, is exclusively designed for Boolean constraints.
While they perform well on certain problem classes, these solvers often fail to scale effectively to large or complex \ac{CSP} instances.
This limitation becomes particularly acute for finite-domain problems, which require conversion to potentially larger Boolean formulas.
This limitation fundamentally restricts their practical application for real-world problems that naturally require finite-domain representations.

\subsection{FourierCSP}
To address the above limitation, we propose \textsc{FourierCSP}, a CLS framework that directly supports finite-domain CSPs without requiring Booleanization. 
We start by applying randomized rounding to the generalized form of \ac{WFE} in Theorem~\ref{thm:WFE}, which reduces the CSP to a differentiable multilinear function over a product of probability simplices.
In particular, although stationary points may occur in the interior of the domain, we prove that \emph{every local minimum of the multilinear objective must lie on the boundary} of the simplices, ensuring the solution quality.
Building on this differentiable formulation, we analyze the behavior of \ac{PGD} and \ac{MD} and derive practical convergence bounds for both methods.
Based on these theoretical insights, we propose a \ac{HD} that combines \ac{PGD} and \ac{MD}, exploiting their complementary geometries to accelerate convergence.

\paragraph{Reduction by Randomized Rounding}
Theorem~\ref{thm:WFE} illustrates how to expand discrete functions to multilinear polynomials through \ac{WFE}.
The resulting Fourier coefficients $\hat{f}(\alpha)$ provide valuable insights into analyzing the discrete function properties. 
However, the expanded function $f$ remains discrete and non-differentiable given the Fourier Bases in Definition~\ref{def:bases}, limiting its utility for gradient-based optimization. 
To enable continuous optimization, we introduce a randomized rounding $\mathcal{R}$ that relaxes discrete variable assignments into a continuous domain.
Consider a set of finite domain variables $X=\{x_1,...,x_n\}$.
The vector probability space of $x_i$ is a simplex denoted by 
\begin{equation*}
    \tilde{D}_i := \left\{p_{i,j} \Bigg| \min_{j\in\mathrm{dom}(x_i)}p_{i,j}\ge{}0,\sum_{j\in\mathrm{dom}(x_i)}p_{i,j}=1\right\}.
\end{equation*}
The collection of all simplexes forms a continuous search space $\tilde{D}=\left(\tilde{D}_1\times\cdots\times\tilde{D}_n\right)$.

\begin{definition}[Randomized Rounding]\label{def:rounding}
    Given a finite domain $D$.
    The randomized rounding function, denoted by $\mathcal{R}:\tilde{D}\to{}D$ is defined by
    \begin{equation*}
        \mathbb{P}[\mathcal{R}(P)_i=j]=p_{i,j}
    \end{equation*}
    for all $j\in\mathrm{dom}(x_i)$, $p_i\in\tilde{D}_i$ and $i\in[n]$.
\end{definition}

Intuitively, if $p_{i,j}$ is closer to $1$, then $\mathcal{R}(a)_i$ is more likely to be rounded to $j$.
Using this randomized interpretation, we can define a probability distribution over full assignments by rounding $\mathcal{R}$ of each variable independently.

\begin{definition}[Probability Space]\label{def:prob_space}
    We define a probability space on a discrete matrix w.r.t. to real point $P\in\tilde{D}$, denoted by $\mathcal{S}:D\to[0,1]$, as:
    \begin{equation*}
        \mathcal{S}_P(X)=\mathbb{P}[\mathcal{R}(P)=X]=\prod_{i=1}^n p_{i,x_i},
    \end{equation*}
    for all $X\in{}D$, with respect to the randomized-rounding function $\mathcal{R}$.
    We use $X\sim\mathcal{S}_P$ to denote that $X$ is sampled from the probability space $\mathcal{S}_P$.
\end{definition}

The probability space allows us to define a continuous extension of any discrete function by taking its expectation over a randomized assignment.

\begin{theorem}[Expectation]\label{thm:expectation}
    Given a Walsh-Fourier expansion $\mathcal{F}$, for a real point $P\in\tilde{D}$, we have
    \begin{equation}\label{eq:expectation}
    \begin{aligned}
        \mathop{\mathbb{E}}_{X\sim\mathcal{S}_P}[\mathcal{F}(X)]
            & = \mathop{\mathbb{E}}_{X\sim\mathcal{S}_P}[-\sum_{c\in{}C}f_c(X)] \\
            & = -\sum_{c\in{}C}\sum_{\alpha}\left(
            \hat{f}_c(\alpha)\cdot\prod_{i=1}^{n}p_{i, \alpha_i}
            \right)\\
            & = \mathcal{F}(P).
    \end{aligned}
    \end{equation}
\end{theorem}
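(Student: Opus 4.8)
The plan is to evaluate the expectation by peeling off the structure in three stages: linearity over the constraint set, the Walsh-Fourier expansion of each constraint, and finally the expectation of a single Fourier basis, which is where the product form of the probability space does all the work. First I would invoke linearity of expectation together with Definition~\ref{def:obj} to write $\mathbb{E}_{X\sim\mathcal{S}_P}[\mathcal{F}(X)] = \sum_{c\in C}\mathbb{E}_{X\sim\mathcal{S}_P}[f_c(X)]$, so that it suffices to handle one constraint at a time. Applying the Walsh-Fourier expansion of Theorem~\ref{thm:WFE} to each $f_c$ and using linearity once more (the coefficients $\hat{f}_c(\alpha)$ are constants that pull out of the expectation) reduces the whole problem to evaluating $\mathbb{E}_{X\sim\mathcal{S}_P}[\phi_\alpha(X)]$ for a fixed multi-index $\alpha$.

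The key step is this last expectation. Writing it out against the probability space of Definition~\ref{def:prob_space},
\[
\mathbb{E}_{X\sim\mathcal{S}_P}[\phi_\alpha(X)] = \sum_{X\in D}\mathcal{S}_P(X)\,\phi_\alpha(X) = \sum_{X\in D}\left(\prod_{i=1}^n p_{i,x_i}\right)\prod_{i=1}^n\phi_{\alpha_i}(x_i).
\]
Since $\phi_{\alpha_i}(x_i)$ is the indicator that $x_i=\alpha_i$, the product $\prod_i\phi_{\alpha_i}(x_i)$ vanishes unless $X=\alpha$ coordinatewise, so the sum over $D$ collapses to the single surviving term $X=\alpha$, giving $\mathbb{E}_{X\sim\mathcal{S}_P}[\phi_\alpha(X)] = \mathcal{S}_P(\alpha) = \prod_{i=1}^n p_{i,\alpha_i}$. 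Equivalently, one may argue from the independence of the coordinate roundings built into the product form of $\mathcal{S}_P$: the expectation of a product of indicators over distinct coordinates factors as $\prod_i\mathbb{P}[x_i=\alpha_i]=\prod_i p_{i,\alpha_i}$. Substituting back through the two earlier reductions yields exactly the claimed double sum $\sum_c\sum_\alpha\hat{f}_c(\alpha)\prod_i p_{i,\alpha_i}$.

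For the multilinearity claim I would observe that, for a fixed $\alpha$, each factor $p_{i,\alpha_i}$ is a single coordinate drawn from the $i$-th simplex block and distinct factors come from distinct blocks, so the monomial $\prod_i p_{i,\alpha_i}$ has degree at most one in every coordinate $p_{i,j}$; a constant-weighted sum of such monomials is therefore multilinear in the entries $p_{i,j}$. I expect no genuine obstacle here, since the entire argument is driven by the product (independence) structure encoded in Definition~\ref{def:prob_space}. The only point requiring mild care is the bookkeeping when a constraint $c$ depends only on a proper subset of the variables (its atom set $A_c$): the indices $\alpha_i$ for variables outside $A_c$ are then marginalized out, which is consistent precisely because the simplex constraint forces $\sum_{j\in\mathrm{dom}(x_i)}p_{i,j}=1$. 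This is the one place I would spell out explicitly rather than gloss over.
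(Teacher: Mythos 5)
Your proposal is correct and follows essentially the same route as the paper's proof: linearity of expectation over constraints, the Walsh--Fourier expansion with coefficients pulled out, and then the computation of $\mathbb{E}_{X\sim\mathcal{S}_P}[\phi_\alpha(X)]$, which the paper carries out via the $\{0,1\}$-valued-function-to-probability step and coordinate independence, and which you obtain equivalently by collapsing the sum over $D$ to the single term $X=\alpha$ (your version even lands cleanly on $\prod_i p_{i,\alpha_i}$, where the paper's final line has a notational slip writing $p_{i,x_i}$). Your added remarks on multilinearity and on marginalizing variables outside a constraint's atom set are sound refinements of details the paper leaves implicit, not a different argument.
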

We can observe from Theorem~\ref{thm:expectation} that $\mathcal{F}(P)$ is a multilinear polynomial.
Therefore, we relax the domain from discrete $D$ to continuous $\tilde{D}$, and it leads to a $\rho$-Lipschitz, and $L$-smooth multilinear continuous function.

\begin{proposition}[Lipschitz]\label{prop:lipschitz}
    Let $\mathcal{F}: \tilde{D}\to[-|C|,0]$ be a multilinear polynomial.
    Let $n$ be the number of variable, then $N=\sum_{i=1}^n|\mathrm{dom}(x_i)|$.
    For every point $P, P'\in\tilde{D}$, we have
    \begin{align*}
        |\mathcal{F}(P)-\mathcal{F}(P')| &\le \rho \|P-P'\|,\\
        |\nabla\mathcal{F}(P)-\nabla\mathcal{F}(P')| &\le L\|P-P'\|,
    \end{align*}
    where $\rho=|C|N^\frac{1}{2}$, $L=|C|N$.
\end{proposition}

As the expectation is a continuous and smooth multilinear polynomial, we formalize the reduction in Theorem~\ref{thm:reduction}.

\begin{theorem}[Reduction]\label{thm:reduction}
    Given a constraint satisfaction problem of formula $F$, which is the conjunction of all constraints $c\in{}C$.
    $F$ is satisfiable if and only if 
    \begin{equation}\label{eq:reduction}
        \min_{P\in\tilde{D}} \left(\mathop{\mathbb{E}}_{X\sim\mathcal{S}_P}[\mathcal{F}(X)]\right) = -|C|.
    \end{equation}
\end{theorem}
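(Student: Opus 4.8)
The plan is to prove the two directions of the equivalence by reducing the continuous statement to an elementary averaging argument over the induced product distribution. The foundation is a purely discrete observation: by Definition~\ref{def:obj}, $\mathcal{F}(X)=\sum_{c\in{}C}f_c(X)$, and since each $f_c$ takes values in $\{0,1\}$ (it equals $1$ exactly when $X$ satisfies $c$), the quantity $\mathcal{F}(X)$ simply counts the number of satisfied constraints. Hence for every discrete assignment $X\in{}D$ we have $0\le\mathcal{F}(X)\le|C|$, with $\mathcal{F}(X)=|C|$ if and only if $X$ satisfies every $c\in{}C$, i.e., $X$ is a satisfying assignment of $F$. I would establish this bound first, as both directions of the theorem hinge on it.

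Next I would record the universal upper bound on the continuous objective. By Definition~\ref{def:prob_space} the weights $\mathcal{S}_P(X)=\prod_{i=1}^n p_{i,x_i}$ are nonnegative and, since each factor satisfies $\sum_{j\in\mathrm{dom}(x_i)}p_{i,j}=1$, they sum to one over all $X\in{}D$; that is, $\mathcal{S}_P$ is a genuine probability distribution. Writing the expectation as $\mathbb{E}_{X\sim\mathcal{S}_P}[\mathcal{F}(X)]=\sum_{X\in{}D}\mathcal{S}_P(X)\,\mathcal{F}(X)$, it is a convex combination of the values $\mathcal{F}(X)$, so the discrete bound immediately gives $\mathbb{E}_{X\sim\mathcal{S}_P}[\mathcal{F}(X)]\le|C|$ for all $P\in\tilde{D}$. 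For the forward direction, if $F$ is satisfiable with witness $X^\star$, I would choose the degenerate point $P^\star$ that places unit mass on $X^\star$ (i.e., $p_{i,x^\star_i}=1$ and $p_{i,j}=0$ otherwise), which is a vertex of the polytope $\tilde{D}$. Then $\mathcal{S}_{P^\star}$ is the point mass at $X^\star$, so the expectation equals $\mathcal{F}(X^\star)=|C|$; combined with the universal upper bound this shows the maximum equals $|C|$.

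For the backward direction I would argue from the equality case of the averaging inequality. Since $\tilde{D}$ is compact (a product of closed simplices) and the expectation is a multilinear polynomial in $P$ by Theorem~\ref{thm:expectation}, hence continuous, the maximum is attained at some $P^\star$. If this maximum equals $|C|$, then the convex combination $\sum_{X}\mathcal{S}_{P^\star}(X)\mathcal{F}(X)$ attains its upper bound $|C|$; because every term satisfies $\mathcal{F}(X)\le|C|$, equality forces $\mathcal{F}(X)=|C|$ for every $X$ in the support of $\mathcal{S}_{P^\star}$. The support is nonempty, so any such $X$ is a satisfying assignment, and $F$ is satisfiable. I expect this equality-case step to be the only delicate point: one must verify that the support of $\mathcal{S}_{P^\star}$ is nonempty and that selecting a single support point yields a valid discrete witness rather than an infeasible fractional one. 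The rest is routine once the discrete bound and the probability-distribution property are in place.
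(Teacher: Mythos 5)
Your proposal is correct and follows essentially the same route as the paper's own proof: for the forward direction you place unit mass on a satisfying witness (a vertex of $\tilde{D}$), and for the backward direction you use the equality case of the averaging bound to extract a satisfying discrete assignment from the support of $\mathcal{S}_{P^\star}$, which is exactly the paper's argument that each $\mathbb{E}_{X\sim\mathcal{S}_P}[f_c(X)]$ must equal $1$ at the maximizer. If anything, you are more careful than the paper, which leaves the universal upper bound $\mathbb{E}_{X\sim\mathcal{S}_P}[\mathcal{F}(X)]\le|C|$ and the attainment of the maximum (compactness of $\tilde{D}$ plus continuity of the multilinear objective) implicit.
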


\subsection{Optimization Approaches}
The essence of \ac{CLS} is using continuous optimization to find the minima of a non-convex but smooth function, where the smoothness of the \ac{WFE} was analyzed in Proposition~\ref{prop:lipschitz}.
Given the inherent difficulty of optimizing non-convex functions~\cite{jain2017non}, it is often sufficient in practice to converge to a local optimum and subsequently verify, via Theorem~\ref{thm:reduction}, whether it corresponds to a global solution.
During continuous optimization, the descent steps move the assignment toward regions of lower objective value, and the iterate eventually reaches a local minimum at which the suboptimality gap $\mathcal{F}(P_t)-\mathcal{F}(P^*)$ becomes small, typically $\le \epsilon$.
A potential concern is the existence of local minima strictly inside the feasible region, \emph{i.e.}, in $\tilde{D} \setminus \delta\tilde{D}$, since such points would provide no guarantee about the quality of the decoded discrete assignment.
However, Lemma~\ref{lmm:saddle} establishes that no interior local maxima exist for our multilinear objective, and thus all optimal points lie on the boundary.
This structural property directly leads to Theorem~\ref{thm:optimality}.

\begin{lemma}\label{lmm:saddle}
    Let $\tilde{D}_\perp$ be a hyperplane with $\vec{1}$ as the normal vector.
    For every non-constant $\mathcal{F}$ and a point $P\in\tilde{D}\backslash{\delta\tilde{D}}$, there exist a region size $\epsilon > 0$ and a directions, $V\in\tilde{D}_\perp$ such that for all $\eta\in(0,\epsilon)$, the following holds.
    \begin{equation*}
        \mathcal{F}(P-\eta{}V) < \mathcal{F}(P)
    \end{equation*}
\end{lemma}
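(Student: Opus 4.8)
The plan is to exploit the \emph{multilinearity} of $\mathcal{C}$ established in Theorem~\ref{thm:expectation}: since $\mathcal{C}$ is affine in each block of coordinates $p_i=(p_{i,j})_{j\in\mathrm{dom}(x_i)}$, every pure second (or higher) derivative within a single block vanishes, so any nonzero mixed partial derivative of $\mathcal{C}$ must differentiate $r$ \emph{distinct} blocks. First I would observe that because $P\in\tilde{D}\setminus\delta\tilde{D}$ is interior, the feasible first-order directions are exactly the tangent directions of the product of simplices: vectors $q$ whose restriction to each block $i$ sums to zero. Any such $q$ is globally orthogonal to $\vec{1}$, so $q\in\tilde{D}_\perp$, and for small enough $\eta$ the point $P+\eta q$ remains in $\tilde{D}$. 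It therefore suffices to produce one tangent direction $q^+$ along which $\mathcal{C}$ strictly increases (and a companion $q^-$ along which it decreases).

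Next I would define $r$ to be the smallest order of a \emph{nonvanishing} tangent mixed partial derivative of $\mathcal{C}$ at $P$, i.e.\ the least $r\ge 1$ for which there exist distinct blocks $i_1,\dots,i_r$ and tangent directions $u_{i_s}\in\tilde{D}_\perp$ supported on block $i_s$ with $D:=\partial_{u_{i_1}}\!\cdots\partial_{u_{i_r}}\mathcal{C}(P)\neq 0$. Such an $r$ is finite: were every tangent mixed partial of every order to vanish at the interior point $P$, the polynomial $\mathcal{C}$ would be constant on the affine hull of $\tilde{D}$, hence on $\tilde{D}$, contradicting non-constancy. By the multilinearity remark the differentiated blocks are automatically distinct. (The case $r=1$ is exactly the situation where the projected gradient is nonzero; the construction below treats all $r$ uniformly.)

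The core step is a sign-flipping construction. Fixing blocks and directions realizing $D\neq 0$, for a sign pattern $\vec{\sigma}\in\{-1,+1\}^r$ set $q(\vec\sigma)=\sum_{s=1}^r \sigma_s u_{i_s}\in\tilde{D}_\perp$. I would then establish the exact one-dimensional identity
\begin{equation*}
    \mathcal{C}\big(P+\eta\,q(\vec\sigma)\big)=\mathcal{C}(P)+\Big(\textstyle\prod_{s=1}^r\sigma_s\Big)\,D\,\eta^{\,r}.
\end{equation*}
Indeed, since $q(\vec\sigma)$ moves only $r$ blocks and $\mathcal{C}$ is multilinear, $\eta\mapsto\mathcal{C}(P+\eta q(\vec\sigma))$ is a polynomial of degree at most $r$; its coefficients of orders $1,\dots,r-1$ are sums of mixed partials that either repeat a block (vanishing by multilinearity) or are tangent mixed partials of order below $r$ (vanishing by minimality of $r$), while the degree-$r$ coefficient equals $(\prod_s\sigma_s)D$ because the unique top monomial selects one factor from each of the $r$ blocks. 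Choosing the signs so that $\prod_s\sigma_s=\operatorname{sign}(D)$ gives $q^+$ with $\mathcal{C}(P+\eta q^+)=\mathcal{C}(P)+|D|\,\eta^r>\mathcal{C}(P)$ for all $\eta\in(0,\epsilon)$, and flipping a single sign gives $q^-$ with the strictly decreasing behavior; here $\epsilon>0$ is any radius small enough to keep $P+\eta q^\pm\in\tilde{D}$, which exists because $P$ is interior.

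The step I expect to be most delicate is justifying that the sign pattern always rescues us, including when $r$ is even: a naive Hessian analysis would stall at an interior point where the projected gradient and all tangent cross-curvatures vanish yet $\mathcal{C}$ is non-constant (a higher-order saddle). The resolution, and the crux of the argument, is that moving along a multi-block diagonal $q(\vec\sigma)$ lets me flip the parity of the leading coefficient $\prod_s\sigma_s$ \emph{independently} of the parity of $r$, so the first non-vanishing term can always be made positive. The rest is bookkeeping: confirming the vanishing of the intermediate coefficients via multilinearity and the minimality of $r$, and checking that the realizing $u_{i_s}$ lie in $\tilde{D}_\perp$ (which holds since each block's zero-sum subspace sits inside the hyperplane normal to $\vec{1}$).
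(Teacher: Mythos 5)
Your proposal is correct, but it takes a genuinely different route from the paper. The paper's proof fixes a single variable block (say block $1$), uses multilinearity to write $\mathcal{C}(A)=\sum_{j}a_{1,j}\,g_j(A_{\backslash 1})+h(A_{\backslash 1})$, takes $q^+$ to be the within-block projected-gradient step (zero on all other blocks), and computes the \emph{exact} gain as $\eta\bigl(\sum_j g_j^2-\tfrac{1}{|\mathrm{dom}(x_1)|}(\sum_j g_j)^2\bigr)$, which is nonnegative by Cauchy--Schwarz --- a short argument tied directly to the projection operator the algorithm actually uses. Its weakness is exactly the degenerate case you flag: the gain is strictly positive only when the within-block partials $g_j(A_{\backslash 1})$ are not all equal at $P$, and at an interior critical point where they are equal in \emph{every} block (this genuinely occurs for non-constant objectives, e.g.\ the COP of a single binary equality constraint $p_{1,1}p_{2,1}+p_{1,2}p_{2,2}$ at the uniform point), the bound collapses to equality; the paper dismisses this with the loose remark that ``equality holds when $\mathcal{C}$ is a constant,'' which does not follow. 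Your argument --- taking the minimal order $r$ of a nonvanishing tangent mixed partial across distinct blocks, moving along the signed multi-block diagonal $q(\vec\sigma)=\sum_s\sigma_s u_{i_s}$, and using multilinearity plus minimality of $r$ to get the exact identity $\mathcal{C}(P+\eta q(\vec\sigma))=\mathcal{C}(P)+(\prod_s\sigma_s)D\,\eta^r$ with the sign chosen to make the leading term positive --- handles first-order increase and higher-order saddles uniformly, and so closes precisely the gap the paper's Cauchy--Schwarz step leaves open (your finiteness argument for $r$ also makes explicit that ``non-constant'' must mean non-constant on the affine hull of $\tilde{D}$). The trade-off: the paper's proof is shorter and constructive with respect to the algorithm's own gradient mapping; yours costs more bookkeeping (verifying the vanishing of intermediate Taylor coefficients and the degree bound along the line) but is the more rigorous and more general argument, and flipping one $\sigma_s$ does deliver the companion direction $q^-$ the lemma mentions.
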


\begin{theorem}[Optimality]\label{thm:optimality}
    Any interior point $P\in\tilde{D}\backslash{}\delta\tilde{D}$ cannot be a local optimum of $\mathcal{F}(P)$.
    The optimal points are only attained from the boundary $\delta\tilde{D}$.
\end{theorem}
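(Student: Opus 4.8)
The plan is to derive Theorem~\ref{thm:optimality} as a direct consequence of Lemma~\ref{lmm:saddle} through a contradiction argument, and then invoke compactness to locate the global maximizer on the boundary. First I would settle the feasibility of an interior perturbation. Fix a non-constant $\mathcal{C}$ (the constant case is degenerate: every point is then trivially a maximizer and the boundary-attainment claim holds vacuously) and take an arbitrary interior point $P\in\tilde{D}\setminus\delta\tilde{D}$. By definition of the relative interior of the product of simplices, every coordinate satisfies $p_{i,j}>0$. The set $\tilde{D}_\perp$ collects exactly the directions that annihilate each sum-to-one constraint blockwise (those orthogonal to $\vec{1}$), so moving along any $q\in\tilde{D}_\perp$ preserves the affine equalities $\sum_{j}p_{i,j}=1$ and keeps each block on its affine hull. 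Because all coordinates of $P$ are strictly positive, there is an $\epsilon_0>0$ with $P+\eta q^+\ge 0$ componentwise for all $\eta\in(0,\epsilon_0)$, whence $P+\eta q^+\in\tilde{D}$ for such $\eta$.

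Next I would apply Lemma~\ref{lmm:saddle} to obtain a feasible strict-ascent direction. The lemma supplies a region size $\epsilon>0$ and a direction $q^+\in\tilde{D}_\perp$ with $\mathcal{C}(P+\eta q^+)>\mathcal{C}(P)$ for all $\eta\in(0,\epsilon)$. Choosing $\eta$ below $\min(\epsilon,\epsilon_0)$, the perturbed point is simultaneously feasible and strictly improving. Hence every relative neighborhood of $P$ inside $\tilde{D}$ contains a point of strictly larger objective value, so $P$ cannot be a local maximum of $\mathcal{C}$ restricted to $\tilde{D}$. Since $P$ was an arbitrary interior point, this establishes the first assertion: no interior point is a local maximum.

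Finally I would argue that a maximizer exists and must therefore lie on $\delta\tilde{D}$. The domain $\tilde{D}$ is closed and bounded in $\mathbb{R}^{N}$, hence compact, and $\mathcal{C}$ is a multilinear polynomial, hence continuous; by the extreme value theorem $\mathcal{C}$ attains a global maximum on $\tilde{D}$. A global maximizer is in particular a local maximizer, so by the previous step it cannot be interior and must belong to the boundary $\delta\tilde{D}$. Together these give both conclusions of the theorem.

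The main obstacle I anticipate is the feasibility bookkeeping rather than the ascent itself. One must verify that the direction $q^+$ returned by Lemma~\ref{lmm:saddle} genuinely lies in the tangent space $\tilde{D}_\perp$ of the product of simplices, so that stepping along it keeps each block on its affine hull, and that the step size is small enough to both (i) preserve nonnegativity, which is where strict positivity of the interior coordinates is essential, and (ii) remain inside the lemma's ascent window $(0,\epsilon)$. Reconciling these two bounds by taking their minimum is the only quantitative point needing care; the remainder reduces to the standard fact that a global maximum over a compact set is a local maximum and thus cannot sit at an interior point once an interior ascent direction is exhibited.
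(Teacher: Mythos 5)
Your proposal is correct and follows essentially the same route as the paper: the paper likewise obtains Theorem~\ref{thm:optimality} as a direct consequence of Lemma~\ref{lmm:saddle}, using the strictly ascending direction $q^+\in\tilde{D}_\perp$ at any interior point to rule out interior local maxima. Your extra bookkeeping (strict positivity of interior coordinates guaranteeing feasibility of small steps along $q^+$, the explicit exclusion of the constant case, and the compactness/extreme-value argument placing a global maximizer on $\delta\tilde{D}$) only makes explicit steps the paper leaves implicit.
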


Having established these optimization characteristics, the next question is how different descent geometries behave within this landscape.
In particular, \ac{PGD} and \ac{MD} offer distinct update rules and induce different optimization geometries on the probability simplices.

\paragraph{Projected Gradient Descent}
Since Eq.~\eqref{eq:expectation} is defined only over the constrained domain $\tilde{D}$, a single gradient step may result in an intermediate point that falls outside this domain. 
Therefore, each gradient step, \emph{i.e.}, Eq.~\eqref{eq:PGD-grad}, must be followed by a projection, \emph{i.e.}, Eq.~\eqref{eq:PGD-proj}, back onto $\tilde{D}$ to ensure feasibility.
\begin{align}
    \label{eq:PGD-grad} Q_{t+1} & = P_t - \eta\cdot\nabla\mathcal{F}(P_t) \\
    \label{eq:PGD-proj} P_{t+1}  & = \mathrm{proj}_{\tilde{D}}(Q_{t+1})
\end{align}

Based on the randomized rounding in Definition~\ref{def:rounding}, the Euclidean projection of $Q\in\mathbb{R}^{|\mathrm{dom}(x_1)|\times\cdots\times|\mathrm{dom}(x_n)|}$, onto $P\in\tilde{D}$, is defined as
\begin{equation}\label{eq:proj}
    \mathrm{proj}_{\tilde{D}}(Q) = \argmin_{P}\left(I_{\tilde{D}}(P) + \frac{1}{2}||Q-P||_2^2\right),
\end{equation}
where $I_{\tilde{D}}$ is an indication function. 
$I_{\tilde{D}}(P)$ is $\infty$ (resp. $0$) when $P\notin\tilde{D}$ (resp. $\in$).

The projection operation $\mathrm{proj}_{\tilde{D}}(Q)$ can be computed efficiently by separately projecting each $q_i$ onto its respective simplex $\tilde{D}_i$, leveraging the product structure of $\tilde{D}$.
Each sub-projection corresponds to a standard simplex projection problem~\cite{chen2011projection}.

\begin{theorem}\label{thm:pgd}
    With the step size $\eta=\frac{1}{L}$, the \ac{PGD} will converge to a $\epsilon$-critical point in $\mathcal{O}\left(\frac{|C|\cdot L}{\epsilon^{2}}\right)$ iterations.
\end{theorem}


\paragraph{Mirror Descent}
While projected gradient methods rely on Euclidean geometry for projection, \ac{MD} generalizes this process to non-Euclidean geometries induced by a strictly convex potential function $\psi$.
\begin{align*}
    Q_{t+1} &= \nabla\psi(P_{t}) - \eta\nabla\mathcal{F}(P_{t}),\\
    P_{t+1} &= \nabla\psi^{-1}(Q_{t+1}).
\end{align*}

We choose negative entropy as the potential function for each probability simplex $\tilde{D}_i$, hence $\psi(P) = {\langle P, \log(P) \rangle}$.
Then we have
\begin{align*}
    \nabla\psi(P) =\log(P), \quad \nabla\psi^{-1}(Q)= \frac{\exp(Q)}{\langle \exp(Q), \mathbf{1} \rangle}.
\end{align*}

\begin{theorem}\label{thm:md}
    Let $\Theta=\sum_{i=1}^n\log|\mathrm{dom}(x_i)|$. 
    Let $T$ be the number of \ac{MD} steps.
    with the step size $\eta\le\frac{\sqrt{2\Theta}}{\rho\sqrt{T}}$, then \ac{MD} will converge to a $\epsilon$-critical point in $\mathcal{O}\left(\frac{2\Theta\cdot\rho}{\epsilon^2}\right)$ iterations.
\end{theorem}

\paragraph{Hybrid Descent}
Theorem~\ref{thm:pgd} implies a larger gradient norm $\|\nabla\mathcal{F}\|$ leads to a greater decrease in the objective at each projected gradient step, and therefore accelerates convergence.
In contrast, Theorem~\ref{thm:md} requires a global bound $\|\nabla\mathcal{F}\|\le\rho$, and a larger gradient increases this bound, which forces a smaller step-size in the analysis and ultimately slows the convergence rate of \ac{MD}.
This contrast is made explicit in~\cite{allen2014linear}, which shows that \ac{PGD} and \ac{MD} operate under different geometries and therefore complement each other.
Motivated by this observation, we adopt a hybrid update rule, \ac{HD}, that evaluates both descent directions and selects the better one:
\begin{align*}
    &P_{PGD} = \mathrm{proj}_{\tilde{D}}(P_t - \eta_{PGD}\cdot\nabla\mathcal{F}(P_t)), \\
    &P_{MD}= \nabla\psi^{-1}(\nabla\psi(P_{t}) - \eta_{MD}\nabla\mathcal{F}(P_{t})), \\
    &P_{t+1}=\argmin_{P \in \{\,P_{\mathrm{PGD}},\,P_{\mathrm{MD}}\}} \mathcal{F}(P).
\end{align*}

\paragraph{Implementation Details}
\textsc{FourierCSP} is implemented by extending the core \textsc{jaxopt.ProjectedGradient} framework and incorporating the projection and mapping functions from \textsc{jaxopt.MirrorDescent}~\cite{jax2018github,blondel2022efficient}. 
We adopt the default settings in \textsc{jaxopt}, using a maximum of 500 gradient steps and a tolerance of 0.001.
The step size is determined automatically via backtracking line search, with a maximum of 15 backtracking steps and a decrease factor of 0.5.
All descent methods are evaluated both with and without \ac{FISTA} acceleration.

\section{Experimental Section}
In this section, we compare our tool, \textsc{FourierCSP}, with versatile state-of-the-art solvers. 
The objective of our experiments is to answer the following research questions.

\noindent\textbf{RQ1.} 
Given their Boolean-encoded formulas of CSP, how well does \textsc{FourierCSP} perform in solving them?

\noindent\textbf{RQ2.} 
How do different descent methods compare in effectiveness and robustness in the \textsc{FourierCSP} framework?

\noindent\textbf{RQ3.} 
How does \textsc{FourierCSP} compare to SAT, ILP, and CP solvers in solving constraint satisfaction problems that the constraints have structural patterns?

\noindent\textbf{RQ4.} 
How does \textsc{FourierCSP} perform relative to SAT, ILP, and CP solvers in solving constraint optimization problems that the constraints have structural patterns?

\begin{figure*}[t]
    \centering
    \subfigure[\#Iter to local minimum]{
        \includegraphics[width=0.23\linewidth]{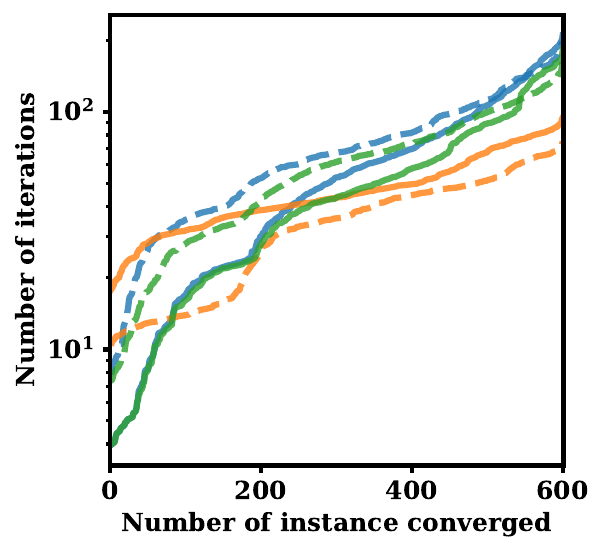}
    }\hfill
    \subfigure[Time to local minimum]{
        \includegraphics[width=0.23\linewidth]{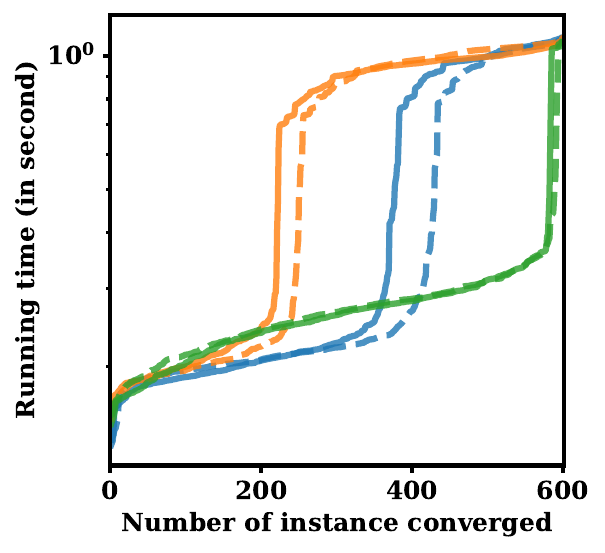}
    }\hfill
    \subfigure[\#Iter to global minimum]{
        \includegraphics[width=0.23\linewidth]{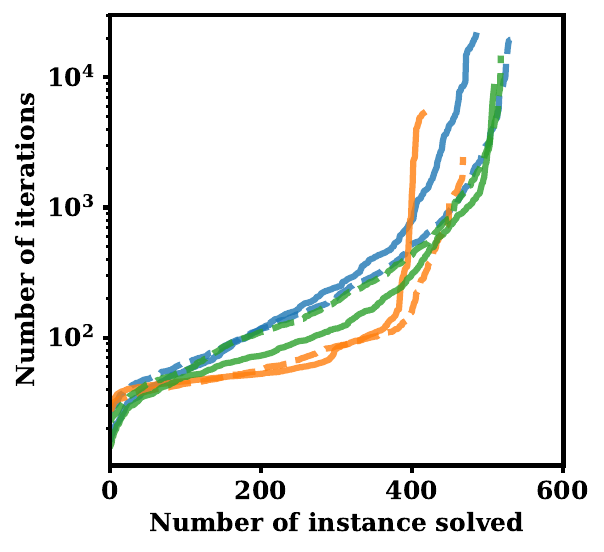}
    }\hfill
    \subfigure[Time to global minimum]{
        \includegraphics[width=0.23\linewidth]{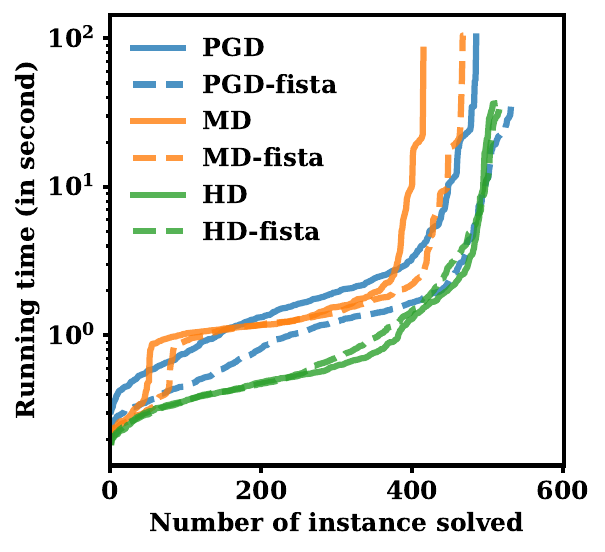}
    }
    \caption{The performance of \textsc{FourierCSP} with different descent methods on the random benchmark.}
    \label{fig:random}
\end{figure*}

\paragraph{Benchmarks and Encodings}
To fully exploit the potential of Fourier expansion-based CLS, benchmark instances should ideally be symmetric, \emph{i.e.}, their constraints should exhibit similar structural patterns.
This symmetry enables efficient GPU acceleration by allowing the solver to fully utilize parallelism, achieving up to 100+ times speedup in gradient computation compared to CPU~\cite{cen2025massively}.
The benchmarks include constraint satisfaction and optimization problems.
The first benchmark consists of \textbf{random hybrid constraints}, combining \emph{greater-than}, \emph{not-all-equal}, and \emph{modulo} constraints.
For each combination of $n_\text{var}\in\{100,...,1000\}$ and $n_\text{state}\in\{4,8,16,32\}$, we generate 10 random instances. 
Each instance contains $\frac{n_\text{var}}{10}$ of each type.
The arity (length) of NAE and modulo constraints is fixed to 10.
The second benchmark is \textbf{task scheduling problems}, which can be useful for the trace scheduler of multicore hardware systems.
For each combination of $N_{\text{worker}} \in \{32, 64, 128, 256, 512\}$ and $m \in \{4, 8, 16\}$, we set the number of jobs to $N_{\text{job}} = m \cdot N_{\text{worker}}$, and generate 10 random instances accordingly.
The third benchmark, \textbf{graph coloring with hashing queries}, is an optimization problem motivated by approximate model counting~\cite{pote2025towards}, where hash functions are employed to partition the solution space. 
We use graph coloring as the underlying problem and add parity (modulo 2) constraints to the formula to serve as hash functions.
For each $N_\text{node} \in \{512, 1024, 2048, 4096\}$ and $N_\text{color} \in \{8, 16, 32, 64\}$, we generate 10 random instances. 
Additionally, $N_\text{node}$ modulo 2 constraints are added to each instance. 

Benchmark instances are linearized for ILP solvers and Booleanized for Boolean SAT solvers. 
Further details on the benchmarks and the encodings used for different solvers can be found in the appendix, where we show that the Booleanization will naturally lead to pseudo-Boolean and XOR constraints.
Further encoding the largest Booleanized instance into CNF~\cite{ignatiev2018pysat,li2000integrating} yields a formula comprising over 160 million variables and 710 million clauses. 
Although modern SAT solvers are highly optimized and capable of solving large benchmarks, such encodings are extremely memory-inefficient; for instance, the corresponding CNF text file is around 16 GB in size.
The details of the encoding are in Appendix~\ref{sec:benchmark_encode}.

\paragraph{Solver Competitors}
\textbf{(1)} \textsc{FourierSAT}~\cite{kyrillidis2020fouriersat}: A CLS solver designed for Boolean formulas that are conjunctions of symmetric Boolean constraints.
In~\citep{cen2025massively}, it uses \ac{PGD} with \ac{FISTA} as the optimizer. 
\textbf{(2)} \textsc{FourierMaxSAT}~\cite{cen2025massively}: An extension of FourierSAT incorporating an SAT solver for warm-start initialization.
\textbf{(3)} \textsc{LinPB}~\cite{yang2021engineering}: A SAT solver for linear pseudo-Boolean and XOR constraints. 
It employs RoundingSAT, which extends CDCL to pseudo-Boolean reasoning, as the underlying pseudo-Boolean solver. XOR constraints are supported via Gauss-Jordan elimination, adapted from CryptoMiniSAT.
\textbf{(4)} \textsc{Gurobi-ILP}~\cite{gurobi}: An ILP solver that uses branch-and-bound, cutting planes, and heuristics to handle integer linear constraints efficiently. 
The solver also leverages preprocessing and parallel computation to enhance performance. 
We use version 11.0.3.
\textbf{(5)} \textsc{OR-Tools CP-SAT}~\cite{perron_et_al:LIPIcs.CP.2023.3}: A hybrid solver that integrates a clause-learning SAT core, constraint programming, and linear relaxation within a unified model to solve combinatorial problems using advanced techniques; it won gold medals in all categories of the MiniZinc Challenge 2024.
This solver won gold medals in all categories in the MiniZinc Challenge 2024.
We use version 9.11.
\textbf{(6)} \textsc{ParaILP}~\cite{ijcai2024p768}: A parallel discrete local search solver for ILP.

\paragraph{Experimental Setups}
Solvers run on a high-performance cluster node with dual AMD EPYC 9654 CPUs, 24$\times$96 GB DRAM, and 8 NVIDIA L40S GPUs.
We allow \textsc{Gurobi-ILP}, \textsc{OR-Tools CP-SAT}, and \textsc{ParaILP} to use maximally 256 threads. 
As SAT solvers are particularly memory-intensive due to clause database management~\cite{alekhnovich2000space,elffers2016trade}, simple parallelization via multi-threaded racing, without inter-thread clause sharing, typically yields limited performance improvements. 
Consequently, \textsc{LinPB}, which does not support multithreading, is executed in single-threaded mode. 
The CLS solvers, \textsc{FourierSAT} and \textsc{FourierCSP}, are executed on the GPU to exploit their inherent parallelism and leverage the computational advantages of modern GPU architectures.
We set 1000 seconds as the timeout for each instance.

We assess solver performance on satisfaction problems using the PAR-2 (penalized average runtime 2) score, a metric balancing runtime efficiency and robustness by imposing penalties on timeouts. 
\begin{equation*}
    \text{PAR-2} = \frac{1}{N}\sum_{i=1}^N\left(t_i\cdot{}\mathbf{1}[t_i \leq T]+2T\cdot\mathbf{1}[t_i > T]\right)
\end{equation*}
where $\mathbf{1}(\cdot)$ is the indicator function and $T$ is the time limit.

We evaluate the optimization problem using two metrics: the relative score and the number of instances for which a solver attains the best objective value (\#Win).
The relative score is defined as follows.
\begin{equation*}
    \text{score}(s, i)=\frac{1+c_i^s}{1+c_i^*}
\end{equation*}
where $c_i^s$ denotes the result obtained for instance $i$ by algorithm $s$, and $c_i^*$ denotes the best-known result.

\begin{figure*}[t]
    \centering
    \subfigure[\textsc{FourierCSP} with different descent methods.]{
        \includegraphics[width=0.46\linewidth]{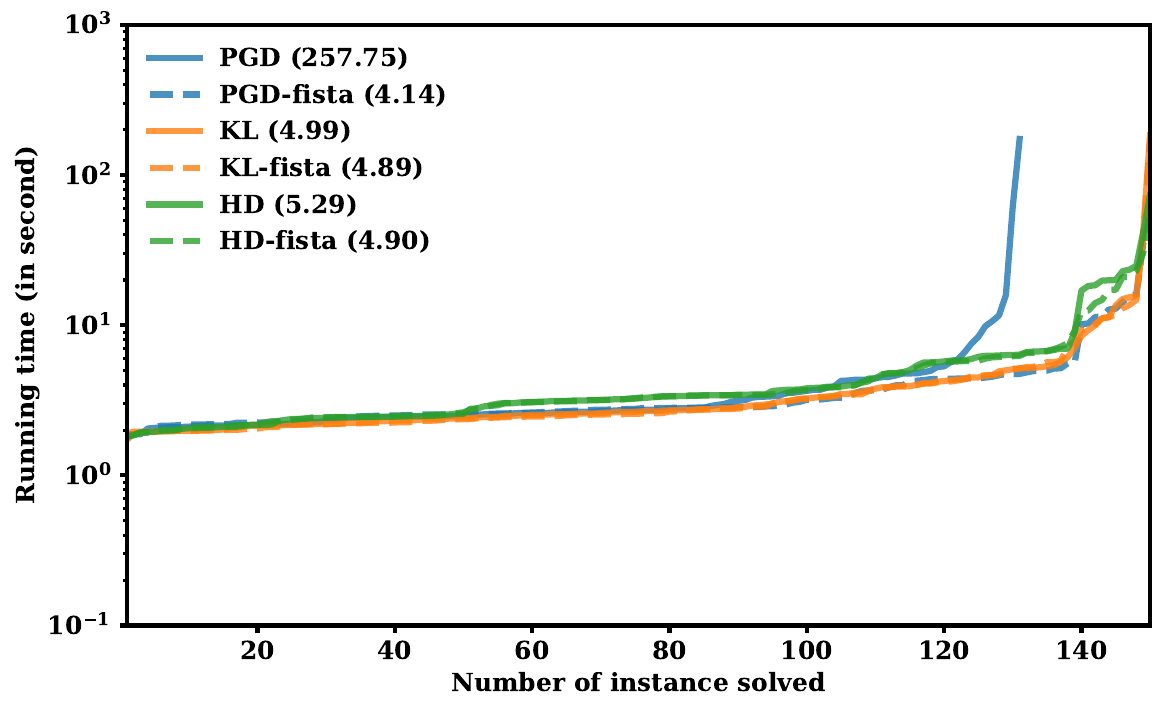}
    }\hfill
    \subfigure[Comparison of \textsc{FourierCSP} with other solvers.]{
        \includegraphics[width=0.46\linewidth]{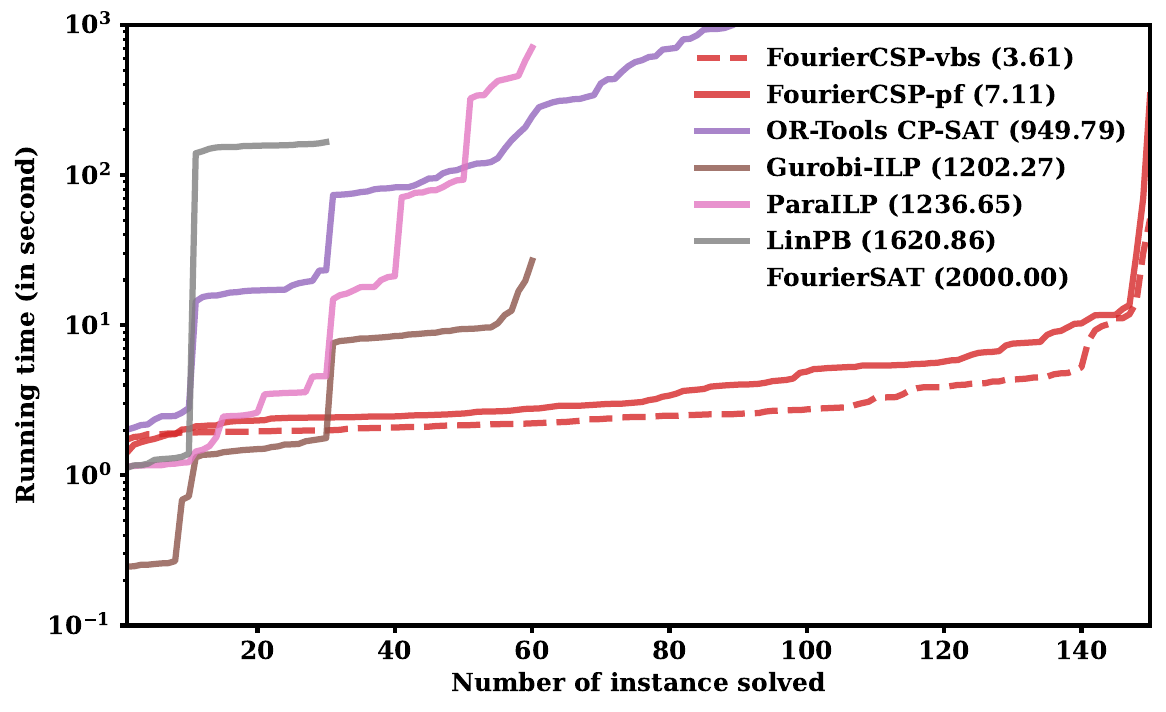}
    }
    \caption{Results on task scheduling problems. The numbers in the legend are the average PAR-2 score of the solvers.}
    \label{fig:scheduling}
\end{figure*}

\subsection{Answer to RQ1. }
Since \textsc{FourierCSP} is the multi-valued extension of \textsc{FourierSAT}, the semantics of solving hybrid SAT formulas remain equivalent to those of \textsc{FourierSAT}.
Prior research indicates that \textsc{FourierSAT} demonstrates strong performance on benchmarks natively formulated as hybrid SAT problems. 
However, our experiments reveal a significant limitation: \textsc{FourierSAT} fails to solve any instances encoded from both scheduling (Fig.~\ref{fig:scheduling}) and graph coloring benchmarks (Fig.~\ref{fig:color}). 
This observation underscores a critical gap in the capability of \textsc{FourierSAT} when handling problems not inherently structured as conjunctions of symmetric Boolean constraints.
Consequently, it highlights the necessity of extending the solver to a more general domain.
Specifically, the results motivate the development of \textsc{FourierCSP}, a multi-valued extension of \textsc{FourierSAT}, which is designed to address these limitations precisely by supporting more general, \emph{i.e.}. multi-valued formulations.

\subsection{Answer to RQ2.}
From Fig.~\ref{fig:random} (a), \ac{MD} requires the fewest gradient iterations to reach a local minimum.
However, Fig.~\ref{fig:random} (b) shows that it nevertheless takes longer in wall-clock time.
This is because all methods use line search to select the step size: although \ac{MD} makes rapid progress per gradient step, its updates satisfy the line-search conditions less easily, causing more backtracking.
As a result, each \ac{MD} iteration is more expensive, leading to a higher total runtime despite fewer iterations.
It's counterintuitive that \ac{HD} is the most time-efficient approach, where each gradient step needs to measure the gradient step made by both \ac{PGD} and \ac{MD}. 
This is because \ac{HD} inherits the faster descent direction of the two while avoiding the costly backtracking behavior that slows down pure \ac{MD}.
The same pattern appears in Fig.~\ref{fig:random}(c–d) for convergence to the global minimum.

Across all settings, \ac{HD} is the most effective and robust method within the \textsc{FourierCSP} framework for constraint satisfaction problems, consistently solving the largest number of instances under a fixed trial budget.
\ac{FISTA} further accelerates both \ac{PGD} and \ac{MD}, with PGD-fista performing competitively with \ac{HD} and \ac{HD}-fista.
This trend is confirmed on the scheduling benchmark in Fig.~\ref{fig:scheduling} (a), where plain \ac{PGD} is significantly slower than all other descent methods, while \ac{PGD}-fista eliminates most of this gap, and even achieves the lowest PAR-2 score.

The results for the constraint optimization benchmark are reported in Table~\ref{tab:color}.
Although \ac{PGD}-fista achieves the highest anytime score, \ac{MD}-fista yields the largest number of best solutions.
Moreover, other descent methods also achieve the best result on certain instances, indicating that no single method uniformly dominates across all problems.
These observations suggest that a portfolio strategy combining multiple descent methods is more reliable and robust than relying on any individual optimizer.

In the remainder of the paper, \textsc{FourierCSP}-vbs denotes the virtual-best solver across all descent variants, \emph{i.e.}, a setting that evaluates each descent method in parallel using multiple GPUs.
In contrast, \textsc{FourierCSP}-pf refers to a sequential portfolio strategy that runs all descent methods on a single GPU.
As illustrated in Fig.~\ref{fig:scheduling}(b), although \textsc{FourierCSP}-vbs employs 6 GPUs, its speedup over the single-GPU portfolio \textsc{FourierCSP}-pf remains below 2$\times$. 
This phenomenon reflects a natural boundary effect: the fine-grained parallelism available within each constraint’s WFE evaluation is already saturated by a single GPU, leaving limited room for further acceleration. 
Additional GPUs mainly enable evaluating multiple random initializations with different descent methods concurrently, which improves robustness but cannot provide unbounded speedup.

\begin{table}[t]
    \centering\begin{tabular}{ccccc}\toprule
    {Descent Method}      && {Anytime Score}      && {\#Win}    \\ 
    \midrule
    {PGD}         && 0.87                 &&  26             \\
    {PGD}-fista   && \textbf{0.97}        &&  15             \\
    {MD}          && 0.80                 &&  20             \\
    {MD}-fista    && 0.76                 &&  \textbf{46}    \\
    {HD}          && 0.86                 &&  11             \\
    {HD}-fista    && 0.92                 &&  13             \\
    \bottomrule\end{tabular}
    \caption{Performance of \textsc{FourierCSP} with different descent methods on the graph coloring with hashing query benchmark.}
    \label{tab:color}
\end{table}

\begin{figure*}[t]
    \centering
    \includegraphics[width=1.0\linewidth]{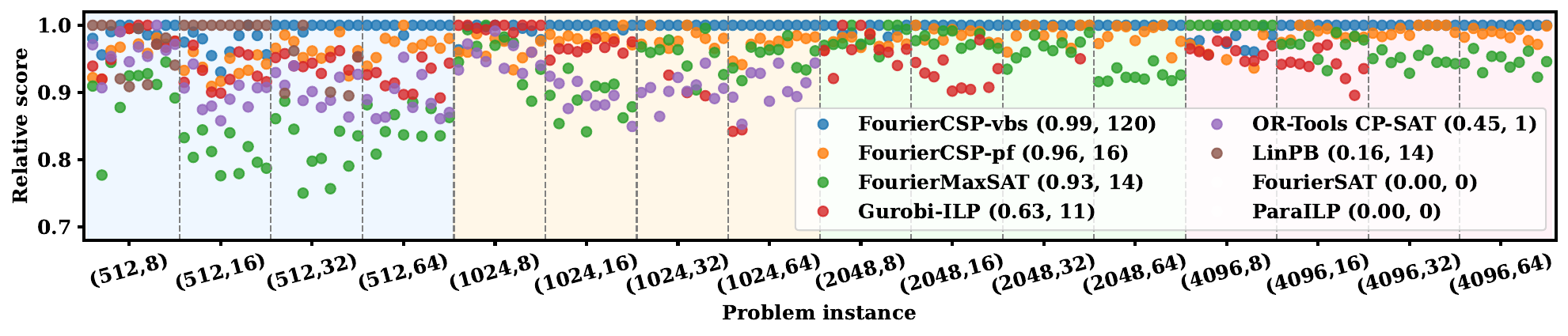}
    \caption{Results on graph coloring with hashing query problems.
    The numbers in the legend are (the average relative score, \#Win) of the solvers.}
    \label{fig:color}
\end{figure*}

\subsection{Answer to RQ3.}
Fig.~\ref{fig:scheduling} presents the results on the second benchmark, constraint satisfaction problems. 
\textsc{FourierCSP} and \textsc{ParaILP} represent two parallel local search solvers based on continuous and discrete optimization methods, respectively. 
Under a timeout limit of 1000 seconds, \textsc{FourierCSP} successfully solves all 150 instances, substantially surpassing \textsc{ParaILP}, which solves only 60.
This significant performance gap can be attributed to the differences in the local search methods.
Discrete local search methods require that each assignment strictly increase the number of satisfied constraints, often leading to plateaus, particularly for constraints that are long or structurally complex, where small perturbations to discrete variable assignments may not immediately yield progress. 
In contrast, \ac{CLS} methods can exploit fine-grained improvements: the optimizer advances as long as the continuous objective decreases, even marginally.

Among complete solvers, \textsc{OR-Tools CP-SAT}, \textsc{Gurobi-ILP}, and LinPB solve 89, 60, and 30 instances, respectively. 
Although \textsc{OR-Tools CP-SAT} supports channeling constraints, it requires auxiliary variables for modeling conditional logic (\emph{e.g.}, if-then-else), which can substantially expand the search space. 
A key advantage of \textsc{FourierCSP} is that it evaluates the Walsh-Fourier expansion directly, without introducing auxiliary variables, thus maintaining a lower-dimensional search space and preserving problem structure.
As a result, \textsc{FourierCSP}-pf achieves a 133.59$\times$ speedup in PAR-2 score compared to \textsc{OR-Tools CP-SAT}.

Instances solved by the ILP solvers are generated with $N_{\text{worker}} \leq 256$, and the resulting ILP formulas contain fewer than 130,000 variables and 162,000 linear constraints. 
Although \textsc{ParaILP} is an incomplete ILP solver, typically known for its efficiency in finding feasible solutions, \emph{i.e.}, it underperforms \textsc{Gurobi-ILP}, as reflected by its higher PAR-2 score.
Instances solved by LinPB are generated with $N_{\text{worker}} \leq 128$, where the encoded PB-XOR formulas contain fewer than 65,000 Boolean variables and 75,000 constraints. 
Despite its reasoning capabilities, LinPB must handle much larger formulas in practice: problems with $N_{\text{worker}} \leq 128$ result in formulas with approximately 295,000 variables and 325,000 constraints. 
\textsc{FourierCSP}-pf achieves a 169.10$\times$ and 227.97$\times$ speedup compared to \textsc{Gurobi-ILP} and \textsc{LinPB}, respectively. 
These results suggest that when solving \ac{CSP} involving non-Boolean variables and non-linear constraints, a solver that can process more compact formulas, \emph{i.e.}, without bloating the search space, can offer significant performance advantages.

\subsection{Answer to RQ4.}
This benchmark includes a substantial number of modulo constraints, which are known to be challenging for \textsc{OR-Tools CP-SAT}~\cite{krupke2024cpsat}. 
While modulo constraints remain beyond the expressive or algorithmic reach of most conventional solvers, \textsc{FourierCSP} can natively accommodate them through its uniform Fourier expansion framework.
As shown in Fig.~\ref{fig:color}, \textsc{FourierCSP}-pf outperforms all competing solvers on this benchmark, achieving a relative score of 0.96. 
The \textsc{FourierCSP}-vbs setting reflects the performance upper bound of \textsc{FourierCSP}-pf and attains the best solution on 120 out of 160 instances.
In contrast to its competitive performance on the previous benchmark, \textsc{OR-Tools CP-SAT}’s relative score is 0.18 lower than that of \textsc{Gurobi-ILP} for these instances.
This relative underperformance can likely be attributed to \textsc{OR-Tools CP-SAT}'s inefficiency in processing modulo constraints.

Under binary logarithm encoding, a modulo 2 constraint can be easily Booleanized to an XOR constraint. 
Although \textsc{LinPB} can natively solve XOR constraints using Gaussian elimination, it achieves a relative score of 0.16 on this benchmark. 
Nevertheless, \textsc{LinPB} contributes more to the virtual best solver than \textsc{OR-Tools CP-SAT}, as it attains the best solution for 14 instances.
Notably, formulas with parity constraints are known to be challenging for DLS solvers~\cite{crawford1994minimal}. 
Unsurprisingly, \textsc{ParaILP} fails to provide valid solutions for any instance in this benchmark.

Previous work demonstrated that \textsc{FourierSAT} performs competitively on graph problems when the encoded formulas comprise exclusively symmetric Boolean constraints~\cite{cen2025massively}. 
In contrast, the Booleanized formulas used in this benchmark consist primarily of pseudo-Boolean constraints, which would require further symmetrization into disjunctive clauses to fit the \textsc{FourierSAT} framework. 
This additional encoding step increases the complexity, making the problem significantly more challenging for \textsc{FourierSAT}, which does not provide valid solutions for any instance in this benchmark.
While \textsc{FourierSAT} encounters difficulties when addressing formulas containing a large number of clauses, \textsc{FourierMaxSAT} adopts a hybrid approach. 
It leverages an SAT solver to solve part of the formula, \emph{i.e.}, the disjunctive clauses, and subsequently uses CLS to improve the solution, \emph{i.e.}, by attempting to satisfy more XOR constraints.
Eventually, \textsc{FourierMaxSAT} achieved a relative score of 0.93.

Moreover, we observe that the instances for which \textsc{FourierCSP} attains the best solutions are typically those with larger problem sizes. This observation suggests a qualitative distinction in scalability: while systematic search solvers tend to be effective for smaller instances, \textsc{FourierCSP} exhibits improved scalability and maintains solution quality as instance size increases.


\section{Conclusion and Future Directions}
In this paper, we introduced \textsc{FourierCSP}, a novel continuous optimization framework for constraint satisfaction problems.
By extending the Walsh-Fourier expansion to finite-domain variables, our approach can handle versatile constraints without the need for memory-intensive encodings or auxiliary variables. 
By randomized rounding, the discrete constraints are transformed into differentiable functions, enabling gradient-based optimization.
We adopted multiple gradient-based approaches for our optimization framework, which are supported by theoretical guarantees on convergence.
Empirical evaluations on benchmark suites demonstrated that \textsc{FourierCSP} is scalable and performs competitively. 
The experimental results indicate that \textsc{FourierCSP} is a substantial extension of \textsc{FourierSAT}, broadening the class of problems that can be efficiently solved using \ac{CLS} techniques. 
Ultimately, the differentiable nature of \textsc{FourierCSP} facilitates seamless integration into neurosymbolic architectures, bridging the long-standing gap between data-driven deep learning and constraint reasoning.

The performance of Fourier transform-based \ac{CLS} relies heavily on GPU parallelism. 
An important direction for future work is to investigate how this parallelism can be effectively leveraged when constraints exhibit non-structural patterns. 
One promising approach is to leverage compact symbolic data structures, such as decision diagrams, to represent constraints and compute their circuit-output probabilities efficiently (Appendix~\ref{sec:dd}).
This would further generalize \textsc{FourierCSP}, enabling its application to a broader range of \ac{CSP} problems.



\section*{Acknowledgments}
We thank Moshe Vardi for the helpful and insightful comments and discussions that contributed to this work. 
This work was supported in part by the National Research Foundation under the Prime Minister’s Office, Singapore, through the Competitive Research Programme (project ID NRF-CRP24-2020-0002 and NRF-CRP24-2020-0003); in part by the Ministry of Education, Singapore, through the Academic Research Fund Tier 2 (project ID MOE-T2EP50221-0008); and in part by the National University of Singapore, through the Microelectronics Seed Fund (FY2024).
Zhiwei Zhang is supported in part by NSF grants (IIS-1527668, CCF1704883, IIS1830549), DoD MURI grant (N00014-20-1-2787), Andrew Ladd Graduate Fellowship of Rice Ken Kennedy Institute, and an award from the Maryland Procurement Office.

\bibliography{example_paper}
\bibliographystyle{icml2026}

\newpage
\appendix
\onecolumn
\section{Technical Proof}\label{sec:proof}
\subsection{Proof of Theorem~\ref{thm:expectation}}
Let $P\in\tilde{D}$ and $X\in{}D$. 
\begin{align}
    \mathop{\mathbb{E}}_{X\sim\mathcal{S}_P}[\mathcal{F}(X)]
    & =\mathop{\mathbb{E}}_{X\sim\mathcal{S}_P}[\sum_{c\in{}C}f_c(X)] \nonumber\\
    & =\sum_{c\in{}C}\mathop{\mathbb{E}}_{X\sim\mathcal{S}_P}[f_c(X)] \nonumber\\
    & =\sum_{\alpha\in{}D} \hat{\mathcal{F}}(\alpha) \cdot \mathop{\mathbb{E}}_{X\sim\mathcal{S}_P}[\phi_\alpha(X)]
    \label{eq:appx:exp1}
\end{align}

Since $\phi_\alpha(X)\in\{0,1\}$ By linearity of the expectation, we have
\begin{align}
    \mathop{\mathbb{E}}_{X\sim\mathcal{S}_P}[\phi_\alpha(X)] 
    &= 1\cdot\mathop{\mathbb{P}}_{X\sim\mathcal{S}_P}[\phi_\alpha(X)=1] + 0\cdot\mathop{\mathbb{P}}_{X\sim\mathcal{S}_P}[\phi_\alpha(X)=0] \nonumber\\
    &= \mathop{\mathbb{P}}_{X\sim\mathcal{S}_P}[\phi_\alpha(X)=1] \nonumber\\
    &= \mathop{\mathbb{P}}_{X\sim\mathcal{S}_P}\left[\prod_{i=1}^n\phi_{\alpha_i}(x_i)=1\right].
    \label{eq:appx:exp2}
\end{align}

As $\phi_{\alpha_i}(\cdot)$'s are indication function, 
\begin{align}
    \mathop{\mathbb{P}}_{X\sim\mathcal{S}_P}\left[\prod_{i=1}^n\phi_{\alpha_i}(x_i)=1\right]
    &= \prod_{i=1}^n\mathop{\mathbb{P}}_{X\sim\mathcal{S}_P}[\phi_{\alpha_i}(x_i)=1] \nonumber\\
    &= \prod_{i=1}^n\mathbb{P}[\mathcal{R}(P)_i=x_i] \nonumber\\
    &= \prod_{i=1}^np_{i,x_i}
    \label{eq:appx:exp3}
\end{align}
where the second equation holds due to Definition~\ref{def:rounding}. 
By substituting Eq.~\eqref{eq:appx:exp2} and Eq.~\eqref{eq:appx:exp3} into Eq.~\eqref{eq:appx:exp1}, we can have
\begin{align*}
    \mathop{\mathbb{E}}_{X\sim\mathcal{S}_P}[\mathcal{F}(X)]
    =\sum_{\alpha\in{}D} \hat{\mathcal{F}}(\alpha) \cdot \prod_{i=1}^np_{i,x_i}
    \label{eq:appx:exp1}
\end{align*}

\subsection{Proof of Proposition~\ref{prop:lipschitz}}
By the triangle inequality, 
\begin{align*}
        & |\mathcal{F}(P)-\mathcal{F}(P')| \\
    \le & |\mathcal{F}(P_{1,1},P_{1,2},\cdots,P_{n,|\mathrm{dom}(x_n)|})-\mathcal{F}(P'_{1,1},P_{1,2},\cdots,P_{n,|\mathrm{dom}(x_n)|})|\\
    +   & |\mathcal{F}(P'_{1,1},P_{1,2},\cdots,y_{n,|\mathrm{dom}(x_n)|})-\mathcal{F}(P'_{1,1},P'_{1,2},\cdots,P_{n,|\mathrm{dom}(x_n)|})| + \cdots \\
    +   & |\mathcal{F}(P'_{1,1},P'_{1,2},\cdots,y_{n,|\mathrm{dom}(x_n)|})-\mathcal{F}(P'_{1,1},P'_{1,2},\cdots,P'_{n,|\mathrm{dom}(x_n)|})| \\
    =   & |(P_{1,1}-P'_{1,1}) \cdot \partial_{1,1}\mathcal{F}(P'_{1,2},\cdots,P'_{n,|\mathrm{dom}(x_n)|})| + \cdots \\
    +   & |(P_{n,|\mathrm{dom}(x_n)|}-P'_{n,|\mathrm{dom}(x_n)|}) \cdot \partial_{n,|\mathrm{dom}(x_n)|}\mathcal{F}(P'_{1,1},\cdots,P'_{n,|\mathrm{dom}(x_n)|-1})|\\
\end{align*}
Since $\mathcal{F}(P)$ is a multilinear function, $|\partial_{i,j}\mathcal{F}|\le|C|$ for all $i\in[n]$ and $j\in[|\mathrm{dom}(x_i)|]$.
Hence we have
\begin{align*}
        & |(P_{1,1}-P'_{1,1})\partial_{1,1}\mathcal{F}(P_{1,2},\cdots,P_{n,|\mathrm{dom}(x_n)|})| + \cdots \\
    + &  |(P_{n,|\mathrm{dom}(x_n)|}-P'_{n,|\mathrm{dom}(x_n)|})\partial_{n,|\mathrm{dom}(x_n)|}\mathcal{F}(P'_{1,1},\cdots,P'_{n,|\mathrm{dom}(x_n)|-1})|\\
    \le & |C|\left(\sum_{i\in[n]}\sum_{j\in[|\mathrm{dom}(x_i)|]}|P_{i,j}-P'_{i,j}|\right)\\
    \le & |C|\sqrt{N}\|P-P'\|
\end{align*}
The last inequality is an application of the Cauchy-Schwarz Inequality.


\subsection{Proof of Theorem~\ref{thm:reduction}}
\begin{itemize}
    \item ``$\Rightarrow$'': 
    Suppose the Boolean formula $F$ is satisfiable and $X\in D$ is one of the solutions.
    Then a real assignment $P\in\tilde{D}$ such that $\mathbb{P}(\mathcal{R}(P)=X)=1$ we have $\mathop{\mathbb{E}}_{X\sim\mathcal{S}_P}[f_c(X)]=1$ for $c\in{C}$.
    Therefore $\mathcal{F}(X)=|C|$ and $\max \mathop{\mathbb{E}}_{X\sim\mathcal{S}_P}[\mathcal{F}(X)] = |C|$.
    \item ``$\Leftarrow$'': 
    Suppose Eq.~\eqref{eq:reduction} holds.
    For every $c\in{}C$ we have $\mathbb{E}_{X\sim{}\mathcal{S}_P}[f_c(X)]=1$.
    Thus, the rounding $X\sim{}\mathcal{S}_P$ is a discrete assignment that satisfies all constraints.
\end{itemize}

\subsection{Proof of Lemma~\ref{lmm:saddle}}
Since $\mathcal{F}$ is a multilinear function, it can be decomposed as
\begin{equation*}
    \mathcal{F}(P) = \sum_{j\in\mathrm{dom}(x_1)}p_{1,j}{}g_j(P_{\backslash{}1}) + h(P_{\backslash{}1}),
\end{equation*}
where $g_j$'s and $h$ are multilinear polynomials not depending on $j$.
Let
\begin{align*}
    \eta q_{i} = \begin{cases}
        {{proj}_{\tilde{D}_i}(p_1 - \eta \nabla \mathcal{F}(p_1)) - p_1}, & i=1,\\
        \vec{0}, & otherwise,
    \end{cases}
\end{align*}
and $\epsilon$ be an arbitrary positive number.
For all $\eta$ such that $\eta\in(0,\epsilon)$, we have
\begin{align*}
    &\mathcal{F}(P-\eta{}V) - \mathcal{F}(P) \\
    =& \sum_{j\in\mathrm{dom}(x_1)}\eta v_{1,j} \cdot g_j(P_{\backslash{}1}) \\
    =& \sum_{j\in\mathrm{dom}(x_1)}\left({proj}_{\tilde{D}}(p_1 - \eta \nabla \mathcal{F}(p_1))_j - p_{1,j}\right) \cdot g_j(P_{\backslash{}1}).
\end{align*}
If ${proj}_{\Delta_1}(p_1-\eta{}q_1)$ is on $\Delta\backslash{\delta\Delta}$, we have
\begin{align*}
    &\frac{1}{\eta}\left(\mathcal{F}(P-\eta{}V) - \mathcal{F}(P)\right)\\
    =& \frac{1}{\eta}\sum_{j\in\mathrm{dom}(x_1)}\left(\eta g_j(P_{\backslash{}1})
    -\frac{\sum_{j\in\mathrm{dom}(x_1)}\left( p_{1,j} + \eta g_j(P_{\backslash{}1})\right)-1}{|\mathrm{dom}(x_1)|}\right)g_j(P_{\backslash{}1})\\
    =& \sum_{j\in\mathrm{dom}(x_1)}\left(g_j(P_{\backslash{}1})
    -\frac{\sum_{j\in\mathrm{dom}(x_1)}g_j(P_{\backslash{}1})}{|\mathrm{dom}(x_1)|}\right)\cdot g_j(P_{\backslash{}1})\\
    =& \sum_{j\in\mathrm{dom}(x_1)}g_j(P_{\backslash{}1})^2 - \frac{1}{|\mathrm{dom}(x_1)|}\left(\sum_{j\in\mathrm{dom}(x_1)}g_j(P_{\backslash{}1})\right)^2\\
    =& \left(\sum_{j\in\mathrm{dom}(x_1)}g_j(P_{\backslash{}1})^2\right)\left(\sum_{j\in\mathrm{dom}(x_1)}\frac{1}{|\mathrm{dom}(x_1)|}\right)
    - \left(\sum_{j\in\mathrm{dom}(x_1)}\frac{g_j(P_{\backslash{}1})}{\sqrt{|\mathrm{dom}(x_1)|}}\right)^2\\
    \ge & 0
\end{align*}
The last inequality is the Cauchy-Schwarz inequality, where the equality holds when $\mathcal{F}$ is a constant.

\subsection{Proof of Theorem~\ref{thm:optimality}}
From Lemma~\ref{lmm:saddle} we can observe that, if ${proj}_{\tilde{D}}(P-\eta{}V)$ is on ${\delta\tilde{D}}$, we can always find an $\epsilon'\in(0,\epsilon)$ such that for all $\eta'\in(0,\epsilon')$, ${proj}_{\tilde{D}}(P-\eta'V)$ is on ${\tilde{D}\backslash\delta\tilde{D}}$.

Lemma~\ref{lmm:saddle} states that, any interior point $a\in\tilde{D}\backslash{}\delta\tilde{D}$ cannot be a local maximum of $\mathcal{C}(a)$.
In contrast, all local maxima must lie on the boundary $\delta\tilde{D}$.
Hence, Lemma~\ref{lmm:saddle} directly results in Theorem~\ref{thm:optimality}.

\subsection{Proof of Theorem~\ref{thm:pgd}}

To prove Theorem~\ref{thm:pgd}, we first need to establish Lemma~\ref{lmm:pgd}.
The proof of Lemma~\ref{lmm:pgd} relies on three propositions.

\begin{proposition}[Projection Point~\cite{nesterov2013introductory}]\label{prop:proj_point}
    For $P, P'\in\tilde{D}$, let
    $h(P)=\|P-(P'+\eta\nabla\mathcal{F}(P'))\|^2$
    and
    $P^*=\argmin_{P\in\tilde{D}}h(P)$, we have
    \begin{equation*}
        \left<
            \nabla{}h(P^*), P-P^*
        \right>\ge 0.
    \end{equation*}
\end{proposition}

Note that the projection point $P^*$ is $P'-\eta{}g(P')$; 
Hence, we have the following Proposition.

\begin{proposition}[Gradient Mapping~\cite{kyrillidis2021solving}]\label{prop:grad_map}
    Given a function $\mathcal{F}$ defined on a convex set $\Omega$, for every $P\in\Omega$, we have
    \begin{equation*}
        \left<\nabla\mathcal{F}(P), g(P)\right>\ge\|g(P)\|^2,
    \end{equation*}
    where $g(P)$ is the gradient mapping.
\end{proposition}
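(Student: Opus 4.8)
The plan is to derive the inequality directly from the optimality condition for the projection, which is already supplied by Proposition~\ref{prop:proj_point}. The key observation is that the projection point coincides with the gradient-mapping step: by the definition of $g$, we have $y^{*}=\mathrm{proj}_{\tilde{D}}(y+\eta\nabla\mathcal{C}(y))=y+\eta{}g(y)$. So I would begin by instantiating Proposition~\ref{prop:proj_point} with $z=y$, yielding the variational inequality $\left<\nabla{}h(y^{*}),\,w-y^{*}\right>\ge{}0$ for every feasible $w\in\tilde{D}$, where $h(w)=\|w-(y+\eta\nabla\mathcal{C}(y))\|^{2}$.

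Next I would compute the gradient of the quadratic distance function, $\nabla{}h(y^{*})=2\bigl(y^{*}-y-\eta\nabla\mathcal{C}(y)\bigr)$, and substitute $y^{*}=y+\eta{}g(y)$ to rewrite it as $\nabla{}h(y^{*})=2\eta\bigl(g(y)-\nabla\mathcal{C}(y)\bigr)$. Dividing the variational inequality by the positive constant $2\eta$ then gives the cleaner form $\left<g(y)-\nabla\mathcal{C}(y),\,w-y^{*}\right>\ge{}0$ for all $w\in\tilde{D}$.

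The crucial step is choosing the right feasible test point. Since $y\in\tilde{D}$ by hypothesis, I can simply set $w=y$. Using $y-y^{*}=-\eta{}g(y)$, the inequality becomes $\left<g(y)-\nabla\mathcal{C}(y),\,-\eta{}g(y)\right>\ge{}0$. Dividing by $-\eta<0$ reverses the inequality, producing $\left<g(y)-\nabla\mathcal{C}(y),\,g(y)\right>\le{}0$, which rearranges to exactly $\|g(y)\|^{2}\le\left<\nabla\mathcal{C}(y),\,g(y)\right>$, the desired claim.

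I do not anticipate a genuine analytical obstacle here, since this is a standard variational-inequality argument; the only thing requiring care is bookkeeping of signs, specifically remembering that dividing by the negative quantity $-\eta$ flips the inequality direction, and confirming that $y$ itself is a legitimate feasible choice for $w$ so that the optimality condition can be tested against it.
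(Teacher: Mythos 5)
Your proposal is correct and follows essentially the same route as the paper's own proof: both instantiate the variational optimality condition of Proposition~\ref{prop:proj_point} at the projection point $y^{*}=y+\eta g(y)$, test it against the feasible point $w=y$, compute $\nabla h(y^{*})=2\eta\bigl(g(y)-\nabla\mathcal{C}(y)\bigr)$, and divide out the (negative) scalar to obtain $\left<\nabla\mathcal{C}(y),g(y)\right>\ge\|g(y)\|^{2}$. Your sign bookkeeping is handled correctly, and if anything your write-up is cleaner than the paper's, which conflates the roles of $y$ and $z$ mid-derivation.
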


Also, the proof relies on the consequence of Proposition~\ref{prop:lipschitz}.
By representing $\nabla\mathcal{F}(y)-\nabla\mathcal{F}(z)$ as an integral, we have
\begin{align}
    \nonumber& \left|\mathcal{F}(P) - \mathcal{F}(P') - \left<\nabla\mathcal{F}(P'),(P-P')\right>\right| \\
    \nonumber= & \left|\int_0^1 \left<\nabla\mathcal{F}(P'+t(P-P')),(P-P')\right>dt - \left<\nabla\mathcal{F}(P'),(P-P')\right>\right|\\
    \nonumber\le& \int_0^1 \|\nabla\mathcal{F}(P'+t(P-P')) - \nabla\mathcal{F}(P')\|\cdot \|P-P'\|dt \\
    \nonumber\le& \int_0^1 Lt\|P-P'\|^2dt \\
    = & \frac{L}{2}\|P-P'\|^2dt
    \label{eq:lipschitz}
\end{align}

\begin{lemma}\label{lmm:pgd}
    Let $\mathcal{F}$ be a multilinear polynomial.
    Let $n$ be the number of variable, then $N=\sum_{i=1}^n|\mathrm{dom}(x_i)|$.
    Let $g(\cdot)$ be the projected gradient mapping onto the simplices $\tilde{D}$.
    Given $P, P'\in\tilde{D}$, if the step size satisfies $\eta\le\frac{1}{L}$, then the projected gradient step $P=P'-\eta{}g(P')$ satisfies:
    \begin{equation*}
        \mathcal{F}(P) - \mathcal{F}(P') \ge \frac{\eta}{2}\|g(P')\|^2
    \end{equation*}
\end{lemma}

\begin{proof}[Proof of Lemma~\ref{lmm:pgd}]
    Eq.~\eqref{eq:lipschitz} can be rewritten as 
    \begin{equation*}
        \mathcal{F}(P) \le \mathcal{C}(P') + \left<
            \nabla\mathcal{F}(P'), P-P'
        \right> + \frac{L}{2}\|P-P'\|^2 
    \end{equation*}
    Let $P=P'-\eta{}g(P')$, the above inequality becomes
    \begin{equation*}
        \mathcal{F}(P) \le \mathcal{F}(P') - \eta\left<
            \nabla\mathcal{F}(P'), g(P')
        \right> + \frac{\eta^2L}{2}\|g(P')\|^2.
    \end{equation*}
    Applying Proposition~\ref{prop:grad_map} and let $\eta\le{}\frac{1}{L}$, we then have
    \begin{align*}
        \mathcal{F}(P) 
        &\le \mathcal{F}(P') - \eta\|g(P')\|^2 + \frac{\eta}{2}\|g(P')\|^2 \\
        &= \mathcal{F}(P') - \frac{\eta}{2}\|g(P')\|^2.
    \end{align*}
\end{proof}

From Lemma~\ref{lmm:pgd}, we can assume $\eta=\frac{1}{L}$. 
Consider a sequence of points $P_{t}$'s such that $P_{t+1} = P_{t} - \eta g(P_{t})$.
Due to Lemma~\ref{lmm:pgd} we have $\mathcal{F}(P_{t+1})-\mathcal{F}(P_{t})\le-\frac{\eta}{2}\|g(P_{t})\|^2$.
Given any initial point $P_{0}$, it will converge to $P^*$. 
After $T$ gradient steps, we have
\begin{align*}
    \mathcal{F}(P^*) - \mathcal{F}(P_{(0)})
    & \ge \frac{1}{2L}\sum_{t=0}^T \|g(P_{t})\|^2\\
    & \ge \frac{T+1}{2L}\min_{t} \|g(P_{t})\|^2,
\end{align*}
which can be rewritten as
\begin{align*}
    \min_{t} \|g(P_{t})\| 
    &\le \sqrt{\frac{2L}{T+1}} \sqrt{\mathcal{F}(P^*) - \mathcal{C}(P_{0})}\\
    &\le \sqrt{\frac{2L}{T+1}} \sqrt{|C|}
\end{align*}
To achieve a point $a_{T}$ such that $\|g(P_{t})\|\le\epsilon$, we require $\sqrt{\frac{2|C|L}{T+1}}\le\epsilon$, which implies $T\sim \mathcal{O}\left(\frac{|C|\cdot L}{\epsilon^2}\right)$.

\subsection{Proof of Theorem~\ref{thm:md}}
We refer to the full convergence proof in Appendix B.2 of~\cite{allen2014linear}.
$\rho$ was shown in Proposition~\ref{prop:lipschitz}
Here, we justify the expression of $\Theta=\sum_{i=1}^n\log|\mathrm{dom}(x_i)|$ as the Bregman diameter under the negative-entropy potential, \emph{i.e.}, $\psi(P)=\langle P,\log P\rangle$.
Hence the quantity $\Theta$ is defined as 
\begin{equation*}
    \Theta = \sup_P \left(\psi(P)-\psi(P_0)\right),
\end{equation*}
where we choose $P_0$ to be the uniform distribution on each probability simplex, since this minimizes $\psi(P_0)$ and therefore yields the tightest bound. 
Hence, 
\begin{align*}
    \psi(P) &= \sum_{i=1}^n\sum_{j=1}^{|\mathrm{dom}(x_i)|}p_{i,j}\log p_{i,j}\\
    \psi(P_0) &= \sum_{i=1}^n\sum_{j=1}^{|\mathrm{dom}(x_i)|}\frac{1}{|\mathrm{dom}(x_i)|}\log \frac{1}{|\mathrm{dom}(x_i)|}
\end{align*}
\begin{align*}
    \Theta &= \sup_P \left(\psi(P)-\psi(P_0)\right)\\
    &= \max_P\left(\psi(P)\right) - \psi(P_0) \\
    &= \sum_{i=1}^n \max_{p_i}\left(\sum_{j=1}^{|\mathrm{dom}(x_i)|}p_{i,j}\log p_{i,j}\right) + \sum_{i=1}^n\log|\mathrm{dom}(x_i)|\\
    &= \sum_{i=1}^n \log|\mathrm{dom}(x_i)|
\end{align*}

\section{Benchmarks and Encodings}\label{sec:benchmark_encode}
\subsection{Task Scheduling Problem}\label{sec:task}
Task scheduling problems can be useful in compilers for multi-core CPUs.
Given a task graph $G(V,E)$, where $V\ni{}v$ is a set tasks, $E\ni(u\to{}v)$ is a set of task dependency.
Given $T$ clock cycles and $S$ workers, we define variables $t_v\in\{0, \cdots, T-1\}$ and $s_v\in\{0, \cdots, S-1\}$ to denote the task $v$ is execute at $t_v$ by worker $s_v$.
Then, the scheduling problem can be formulated as 
\begin{align*}
    F &= F_1\wedge F_2,  \\
    F_1 &= \bigwedge_{(u\to{}v)\in{}E}(t_u<t_v),  \\
    F_2 &= \bigwedge_{(u\to{}v)\notin{}E}\left((t_u\ne{}t_v)\vee(s_u\ne{}s_v)\right).
\end{align*}

For each $T\in\{32, 64, 128, 256, 512\}$, $S\in\{4, 8, 16\}$, we choose $|V|={T\cdot S}/{2}$ to generate 10 directed graph $G(V,E)$.
Every variable $v\in{}V$ has a variable index $idx(v)$.
Between any two variable $u,v\in{}V$ and $idx(v)>idx(u)$, we add an edge $u\to{}v$ with probability $5^{idx(u)-idx(v)}$.

\paragraph{Circuit-Output Probability}
For any $(u\to{}v)\in{}E$, the corresponding Fourier expansion, \emph{i.e.}, circuit-output probability of the constraint $c\in{}F_1$ can be written as
\begin{align*}
    \mathbb{P}[t_u<t_v] 
        &= 1-\mathbb{P}[t_u\ge{}t_v] \\
        &= 1-\sum_{i=0}^{T-1}\sum_{j=i+1}^{T-1}\mathbb{P}[t_u\ge{}i]\cdot\mathbb{P}[t_v=i],
\end{align*}
where $\mathbb{P}[t_u\ge{}i]$ can be efficiently computed by cumulative sum.

For any $(u\to{}v)\notin{}E$, the corresponding Fourier expansion, \emph{i.e.}, circuit-output probability of the constraint $c\in{}F_2$ can be written as
\begin{align*}
    \mathbb{P}[(t_u\ne{}t_v)\vee(s_u\ne{}s_v)] 
    =& 1-\mathbb{P}[(t_u=t_v)\wedge(s_u=s_v)]\\
    =& 1-\mathbb{P}[(t_u=t_v)]\cdot\mathbb{P}[(s_u=s_v)].
\end{align*}

\paragraph{Linearization}

While encoding $F_1$ into {ILP} is straightforward, encoding $F_2$ requires auxiliary variables $q_{u,v,i}\in\{0,1\}$. 
\begin{align*}
    F^{int} &= F_1^{int}\wedge F_2^{int}\wedge F_3^{int}\wedge F_4^{int}\wedge F_4^{int}\wedge F_6^{int},  \\
    F_1^{int} &= \bigwedge_{(u\to{}v)\in{}E}(t_v-t_u\ge{}1),  \\
    F_2^{int} &= \bigwedge_{(u\to{}v)\notin{}E}(t_u - t_v + Tq_{u,v,1} \ge 1), \\
    F_3^{int} &= \bigwedge_{(u\to{}v)\notin{}E}(t_v - t_u + Tq_{u,v,2} \ge 1), \\
    F_4^{int} &= \bigwedge_{(u\to{}v)\notin{}E}(s_u - s_v + Sq_{u,v,3} \ge 1), \\
    F_5^{int} &= \bigwedge_{(u\to{}v)\notin{}E}(s_v - s_u + Sq_{u,v,4} \ge 1), \\
    F_6^{int} &= \bigwedge_{(u\to{}v)\notin{}E}\left(\sum_{i=1}^4q_{u,v,i} \le 3\right). \\
\end{align*}

For $F_2^{int}$ to $F_6^{int}$, the interpretation is as follows: given $(u\to{}v)\notin{}E$, when $(t_u=t_v)\wedge(s_u=s_v)$, all $q_{u,v,i}$ should be 1 to satisfy the constraints in $F_2^{int}$ to $F_5^{int}$.
However, the constraint in $F_6^{int}$ is violated.
Only when $(t_u\ne{}t_v)\vee(s_u\ne{}s_v)$, at least one of the $q_{u,v,i}$ can be 0, then the constraint in $F_6^{int}$ can be satisfied.

\paragraph{Booleanization}

To further Booleanize this problem into Boolean constraints, we use a log-2 encoding.
\begin{align}
    t_v &= \sum_{i=0}^{\log_2(T)-1} 2^i t_{v,i}, \label{eq:log2t}\\
    s_v &= \sum_{i=0}^{\log_2(S)-1} 2^i s_{v,i}, \label{eq:log2s} 
\end{align}
where $t_{v,i}\in\{0,1\}$ and $s_{v,i}\in\{0,1\}$. 
Similarly, substituting $F_1^{int}$ with Eq.~\eqref{eq:log2t} directly leads to $F_1^{bool}$.
And for encoding $F_2$, we introduce $q_{u,v,i}^{(t)}\in\{0,1\}$ and $q_{u,v,i}^{(s)}\in\{0,1\}$.

\begin{align*}
    F^{bool} &= F_1^{bool}\wedge F_2^{bool}\wedge F_3^{bool},  \\
    F_1^{bool} &= \bigwedge_{(u\to{}v)\in{}E}\left(\sum_{i=0}^{\log_2(T)-1} 2^i t_{v,i}-\sum_{i=0}^{\log_2(T)-1} 2^i t_{u,i}\ge{}1\right),  \\
    F_2^{bool} &= \bigwedge_{(u\to{}v)\notin{}E}\bigwedge_{i=0}^{\log_2(T)-1}(t_{u,i} \oplus t_{v,i} \oplus \neg{}q_{u,v,i}^{(t)}), \\
    F_3^{bool} &= \bigwedge_{(u\to{}v)\notin{}E}\bigwedge_{i=0}^{\log_2(S)-1}(s_{u,i} \oplus s_{v,i} \oplus \neg{}q_{u,v,i}^{(s)}), \\
    F_4^{bool} &= \bigwedge_{(u\to{}v)\notin{}E}\left(\bigvee_{i=0}^{\log_2(T)-1}q_{u,v,i}^{(t)} \vee \bigvee_{i=0}^{\log_2(S)-1}q_{u,v,i}^{(s)}\right). \\
\end{align*}

$F_2^{bool}$ and $F_3^{bool}$ (resp. $F_4^{bool}$) should be interpreted similarly to $F_2^{int}$ to $F_5^{int}$ (resp. $F_6^{int}$).
Any $t_{u,i}\ne t_{v,i}$ will lead to $q_{u,v,i}^{(t)}=1$ (or $s$ equivalently), the constraint in $F_4^{bool}$ will be satisfied.

\subsection{Graph Coloring Problem with Hashing Queries}~\label{sec:color}
Given a graph $G(V,E)$, where $V\ni{}v$ is a vertex set, $E\ni(u, v)$ is an edge set.
Given $C$ colors, we define variables $x_v\in\{0, \cdots, C-1\}$ to denote the vertex $v$ is colored by $x_v$.

Then, the graph coloring problem can be formulated as a constraint set
\begin{align*}
    C_1 &= \bigwedge_{(u,v)\in{}E}(x_u\ne{}x_v). 
\end{align*}

In this problem, we additionally use a hash function to generate soft modulo constraints.
The hash code $h\in{}H$ randomly samples half of the variables from the vertex set. 
The hash function is defined by $\sum_{i\in h} x_i\mod 2$. 
Then, the soft constraint set is:
\begin{align*}
    C_2 &= \bigwedge_{h\in{}H}\left(\sum_{i\in h} x_i\mod 2=1\right). 
\end{align*}

The constraint optimization problem is
\begin{align*}
    \min_a |H|\cdot\text{cost}(C_1, a) + \text{cost}(C_2, a), 
\end{align*}
where $\text{cost}(C, a)$ is the number of unsatisfied constraints in $C$ when given an assignment $a$.

For each $|V|\in\{512, 1024, 2048, 4096\}$, $C\in\{8, 16, 32, 64\}$, we generate 10 random regular graphs with degree $2C$.
We add $|V|$ modulo constraints as soft constraints. 

\paragraph{Circuit-Output Probability}
For any $(u,v)\in{}E$, the corresponding Fourier expansion, \emph{i.e.}, circuit-output probability of the constraint $c\in{}C_1$ can be written as
\begin{align*}
    \mathbb{P}[x_u\ne{}x_v] 
        &= 1-\mathbb{P}[x_u=x_v] \\
        &= 1-\mathbb{P}[x_u=i]\cdot\mathbb{P}[x_v=i].
\end{align*}

For any $h\in{}H$, as inspired by the Fourier expansion of XOR constraints~\cite{kyrillidis2020fouriersat}, the circuit-output probability of the constraint $c\in{}C_2$ can be written as
\begin{align*}
    \mathbb{P}\left[\sum_{i\in{}h}x_i\mod{2}=1\right] 
    =& \mathbb{P}[x_j\mod{2}=0]\cdot\mathbb{P}\left[\sum_{i\in{}h\backslash{j}}x_i\mod{2}=1\right] \\
    +& \mathbb{P}[x_j\mod{2}=1]\cdot\mathbb{P}\left[\sum_{i\in{}h\backslash{j}}x_i\mod{2}=0\right] \\
    =& \frac{1-\left(1-2\mathbb{P}[x_j\mod{2}=1]\right)\cdot\left(1-2\mathbb{P}\left[\sum_{i\in{}h\backslash{j}}x_i\mod{2}=1\right]\right)}{2} \\
    =& \frac{1-\prod_{i\in{}h}\left(1-2\sum_{j=0}^{\frac{C}{2}-1}\mathbb{P}[x_i=2j+1]\right)}{2},
\end{align*}
which can be handled by slicing.

After the above relaxations, the main computation of \textsc{FourierCSP} can be amenable for parallelization using Jax~\cite{jax2018github}. 

\paragraph{Linearization}
When encoding $C_1$ for {ILP}, we introduce two auxiliary variables $q_{u,v}, q_{v,u}\in\{0,1\}$ for every $(u,v)\in{}E$.
\begin{align*}
    C_1^{int} &= \bigwedge_{(u,v)\in{}E}\left(x_u-x_v+Cq_{u,v}\ge{}1\right),\\
    C_2^{int} &= \bigwedge_{(u,v)\in{}E}\left(x_v-x_u+Cq_{v,u}\ge{}1\right),\\
    C_3^{int} &= \bigwedge_{(u,v)\in{}E}\left(q_{u,v}+q_{v,u}\le{}1\right).
\end{align*}

When encoding $C_2$ for {ILP}, we introduce two auxiliary variables $q_{h}\in\left\{0, ..., \frac{|h|(C-1)}{2}\right\}$, and $p_{h}\in\{0,1\}$ for every $h\in{}H$.
\begin{align*}
    C_4^{int} &= \bigwedge_{h\in{}H}\left(\sum_{i\in h} x_i - 2q_{h} + p_{h} = 0\right), 
\end{align*}
where $p_h$ should be 1 (resp. 0) when $\sum_{i\in h} x_i$ is odd (resp. even).
Then {ILP} should be formulated as:
\begin{align*}
    & \min |H| - \sum_{h\in{}H}p_h \\
    & \text{s.t. } C_1^{int}\cup C_2^{int}\cup C_3^{int} \cup C_4^{int}
\end{align*}

\paragraph{Booleanization}
The Booleanization for PB-XOR is similar to the previous section. 
\begin{align}
    x_v &= \sum_{i=0}^{\log_2(C)-1} 2^i x_{v,i}, \label{eq:log2x}
\end{align}
where $x_{v,i}\in\{0,1\}$. 

For every $(u,v)\in{}E$, and $i\in\{0,...,\log_2(C)-1\}$, we introduce an auxiliary variable $q_{u,v,i}\in\{0,1\}$.
Consequently, $C_1$ can be encoded by
\begin{align*}
    C_1^{bool} &= \bigwedge_{(u,v)\in{}E}(x_{u,i}\oplus{}x_{v,i}\oplus{}\neg{}q_{u,v,i}), \\
    C_2^{bool} &= \bigwedge_{(u,v)\in{}E}\left(\bigvee_{i=0}^{\log_2(C)-1} q_{u,v,i}\right).
\end{align*}

Note that the parity of $x_u$ only depends on $x_{u,0}$.
Hence, the soft constraint set $C_2$ can be encoded by
\begin{equation*}
    C_3^{bool} = \bigwedge_{h\in{}H}\left(\bigoplus_{i\in{}h}x_{i,0}\right)
\end{equation*}

Eventually, the constrained optimization problem becomes
\begin{equation*}
    \min_a |H|\cdot\left(\text{cost}(C_1^{bool}, a)+\text{cost}(C_2^{bool}, a)\right) + \text{cost}(C_3^{bool}, a).
\end{equation*}

\section{Efficient Evaluation and Differentiation by Decision Diagrams}\label{sec:dd}
Theorem~\ref{thm:reduction} reveals that \ac{CSP} solving can be reduced to a continuous optimization problem.
The performance of gradient-based optimization crucially depends on the efficiency of evaluating and differentiating the objective function.
However, the complexity of evaluation and differentiation is in \#P-hard, \emph{i.e.}, Eq.~\eqref{eq:expectation} has at most exponentially many terms, in general.
Motivated by~\cite{thornton1994efficient}, we reformulate Eq.~\eqref{eq:expectation} to \ac{COP}, which admits more tractable computation when a structured representation can encode the underlying constraint, \emph{i.e.}, decision diagrams.

\begin{definition}[Circuit-Output Probability]
    Let $f_c$ be a discrete function over a set $X$ of finite domain variables.
    Let $P$ be the variable input probabilities.
    The circuit-output probability problem of $f_c$ w.r.t $P$, denoted by ${COP}_c(P)$, is the probability of $f_c$ outputs $1$ given the value of each variable independently sampled from a multimodal distribution, \emph{i.e.},
    \begin{equation*}
        {COP}_c(P) = \sum_{X\in D} f_c(X)\prod_{i=1}^n p_{i,x_i}. 
    \end{equation*}
\end{definition}

\begin{corollary}\label{cor:COP}
    Given a constraint $c$ defined on a finite domain $D$, for a real point $P\in\tilde{D}$, we have
    \begin{align*}
        \mathop{\mathbb{E}}_{X\sim\mathcal{S}_P}[f_c(X)] = {COP}_c(P).
    \end{align*}
\end{corollary}

\begin{proof}[Proof of Corollary~\ref{cor:COP}]
Let $P\in\tilde{D}$ and $X\in{}D$. 
For each $c\in{}C$, by linearity of expectation, we have
\begin{align*}
    \mathop{\mathbb{E}}_{X\sim\mathcal{S}_P}[f_c(X)]
    &= 1\cdot\mathop{\mathbb{P}}_{X\sim\mathcal{S}_P}[f_c(X)=1] + 0\cdot\mathop{\mathbb{P}}_{X\sim\mathcal{S}_P}[f_c(X)=0]\\
    &= \mathop{\mathbb{P}}_{X\sim\mathcal{S}_P}[f_c(X)=1]\\
    &= {COP}_c(P).
\end{align*}
\end{proof}

Prior work demonstrates that BDD can encode many Boolean constraints in compact size~\cite{kyrillidis2021continuous,sasao1996representations,aavani2011translating}.
The multiple-valued extension of BDD is called a multi-valued decision diagram (MDD)~\cite{bryant1995binary}. 
The MDD can be versatile in representing many constraints through the translation from table constraints~\cite{cheng2010mdd}\footnote{A table constraint is explicitly represented as the set of solutions or non-solutions for that constraint.}.
Similar to BDD, two MDDs can be combined through an \textsc{Apply} operation~\cite{bryant1986graph,perez2015efficient}, offering flexibility in encoding versatile constraints.
The MDDs can be reduced by merging nodes that have the same set of outgoing neighbors associated with the same label~\cite{cheng2010mdd,perez2015efficient}.
Moreover, the size can be further reduced by sifting~\cite{miller2003augmented}.
A compact-size decision diagram means that the COP on it can be evaluated and differentiated efficiently.

\subsection{Evaluation}
\begin{lemma}[Top-Down Evaluation]\label{lmm:forward}
    Let $G(V,E)$ be the MDD of constraint $c$, run Algorithm~\ref{alg:forward} on $G$ with a real assignment $P\in\tilde{D}$.
    Let $X\in D$ randomly rounded from $P$ by $\mathbb{P}[x_i=j]=p_{i,j}$ for all $j\in\mathrm{dom}(x_i)$.
    For each $v\in V$, we have
    \begin{equation*}
        m_{td}[v] = \mathbb{P}_{M,a}[v],
    \end{equation*}
    where $\mathbb{P}_{G,a}[v]$ is the probability that the node $v$ is on the path generated by $b$ on $G$.
    Especially, 
    \begin{equation*}
        m_{td}[\textsc{true}] = {COP}_c(P_a),
    \end{equation*}
    \emph{i.e.}, Algorithm~\ref{alg:forward} returns the COP of $G$ in time $O(|V|)$.
\end{lemma}

\begin{proof}[Proof of Lemma~\ref{lmm:forward}]
We prove by induction on the MDD $G(V, E)$.
Let $A\in\tilde{D}$ and $B\in{}D$. 
\begin{itemize}
    \item \emph{Basic step}: For the root node $r$ of $M$, $m_{td}[r]=1$. This equation holds because $r$ is on the path generated by every discrete assignment on $G$.

    \item \emph{Inductive step}: For each non-root node $v$ of $V$, let $par_i(v)$ be ths set of parent nodes of $v$ with an edge labeled by $i$.
    After Algorithm~\ref{alg:forward} terminate, we have
    \begin{equation*}
        m_{td}[v] = \sum_{u\in par_i(v)} a_{id(u), i}\cdot m_{td}[u],
    \end{equation*}
    where $id(u)$ is the mapping of the node $u$ to the corresponding variable index.
    By inductive hypothesis,
    \begin{align*}
        m_{td}[v] 
        &= \sum_{u\in par_i(v)} \mathbb{P}[b_{id(u)}=i] \cdot \mathbb{P}[u|G,A] \\
        &= \mathbb{P}[v|G,A]
    \end{align*}
    The second equation holds because the events of reaching $v$ from different parents are exclusive.
\end{itemize}
\end{proof}

\begin{algorithm}[tb]
\caption{Top-Down Traversal of MDD}\label{alg:forward}
\textbf{Input} An MDD $G(V, E)$ representation of constraint $c$, a real assignment $P\in\tilde{D}$.\\
\textbf{Output} Circuit-output probability ${COP}_c(P)$.

\begin{algorithmic}[1]
\STATE Let $m_{td}[v]$ be the forward message of each node $v\in{}V$, where the root node is initialized as 1, otherwise 0.\\
\FOR{$v\in{}\texttt{sorted}(V)$}
    \FOR{$u = \mathrm{child}(v)$}
        \STATE $m_{td}[u] += p_{\mathrm{id}(v),\mathrm{label}(v,u)} \cdot m_{td}[v]$ 
    \ENDFOR
\ENDFOR
\STATE \textbf{return} $m_{td}[true]$
\end{algorithmic}
\end{algorithm}

Alternatively, traversing the MDD bottom-up can also compute the circuit-output probability. 

\begin{lemma}[Bottom-Up Evaluation]\label{lmm:backward}
    Let $G(V,E)$ be the MDD of constraint $c$, run Algorithm~\ref{alg:forward} on $G$ with a real assignment $P\in\tilde{D}$.
    For each $v\in V$, let $f_{c,v}$ be the sub-function of the sub-MDD generated by regarding $v$ as the root.
    The following holds:
    \begin{equation*}
        m_{bu}[v] = {COP}_{c,v}(P),
    \end{equation*}
    \emph{i.e.}, Algorithm~\ref{alg:backward} returns the COP of $G$ and in time $O(|V|)$.
\end{lemma}

\begin{proof}[Proof of Lemma~\ref{lmm:backward}]
We prove this by structural induction on MDD $G(V, E)$.
Let $A\in\tilde{D}$, we have
\begin{itemize}
    \item \emph{Basic step}: For the terminal nodes, $m_{bu}[\texttt{true}]=1$ and $m_{bu}[\texttt{false}]=0$.
    The statement holds since the sub-functions given by terminal nodes are constant functions with values 1 and 0, respectively.

    \item \emph{Inductive step}: For each non-terminal node $v$, let $v.i$ be the child node connecting $v$ by edge label $i$.
    after Algorithm~\ref{alg:backward} terminates, we have
    \begin{equation*}
        m_{bu}[v] = \sum_{i\in\mathrm{dom}(x_{id(v)})} a_{id(v), i}\cdot m_{bu}[v.i].
    \end{equation*}

    By the inductive hypothesis, we have
    \begin{align*}
        m_{bu}[v] 
        &= \sum_{i=0}^k a_{id(v), i}\cdot {COP}_{c,v.i}(A)\\
        &= \sum_{i=0}^k \mathbb{P}[b_{id(v)}=i]\cdot {COP}_{c,v.i}(A)\\
        &= {COP}_{c, v}(A)
    \end{align*}
    where ${COP}_{c, v}$ denotes the circuit-output probability of subgraph of $G$ rooted at node $v$.
\end{itemize}
\end{proof}

\begin{algorithm}[tb]
\caption{Bottom-Up Traversal of MDD}\label{alg:backward}
\textbf{Input} An MDD $G(V, E)$ representation of constraint $c$, a real assignment $P\in\tilde{D}$.\\
\textbf{Output} Circuit-output probability ${COP}_c(P)$.

\begin{algorithmic}[1]
\STATE Let $m_{bu}[v]$ be the bottom-up message of each node $v\in{}V$, where the true terminal node is initialized as 1, otherwise 0.\\
\FOR{$v\in{}\texttt{sorted}(V, \texttt{reverse=True})$}
    \FOR{$u \in \mathrm{child}(v)$}
        \STATE $i=\mathrm{id}(v)$
        \STATE $m_{bu}[v] += p_{\mathrm{id}(v),\mathrm{label}(v,u)} \cdot m_{bu}[u]$ 
    \ENDFOR
\ENDFOR
\STATE \textbf{return} $m_{bu}[root]$
\end{algorithmic}
\end{algorithm}

\subsection{Differentiation}
The main idea of differentiation is from probabilistic inference~\cite{pearl1988belief,shafer1990probability}, which traverses the graph model top-down and bottom-up. 
The following theorem shows that the partial derivative of COP can be obtained by top-down and bottom-up messages from Algorithm~\ref{alg:forward} and~\ref{alg:backward}.

\begin{theorem}[Derivatives]\label{thm:diff}
    After running Algorithm~\ref{alg:forward} and~\ref{alg:backward} on MDD for constraint $c$ with a real assignment $P\in\tilde{D}$, the partial derivative of COP on $p_{i,j}$ for all $j\in\mathrm{dom}(x_i)$ can be computed by
    \begin{equation}
        \frac{\partial{COP}_c(P)}{\partial p_{i,j}} = \sum_{v\in\{u:id(u)=i\}}m_{td}[v] \cdot m_{bu}[v.j]\label{eq:diff}
    \end{equation}
\end{theorem}

\begin{proof}[Proof of Theorem~\ref{thm:diff}]
Given an {MDD} $G(V, E)$, expand all terminal nodes such that each can only have one parent node.
The expanded {MDD} is called $\mathcal{G}$, which has a true terminal set $T$ and a false terminal set $F$.
We have the following by repeating the proof of Lemma~\ref{lmm:forward} and~\ref{lmm:backward} on $\mathcal{G}$.
\begin{align*}
    {COP}_c(A) = \sum_{v\in{}T} \mathbb{P}[v|A,\mathcal{G}] = \sum_{v\in{}T} m_{td}'(v) 
\end{align*}
    
For any non-terminal node $v\notin(T\cup F)$, 
\begin{align*}
    m_{td}[v] &= m_{td}'[v], \\
    m_{bu}[v] &= m_{bu}'[v].
\end{align*}
    
For any true terminal node $v\in{T}$ and false terminal node $u\in{F}$, 
\begin{align*}
    m_{bu}'[v] &= 1, \\
    m_{bu}'[u] &= 0.
\end{align*}

Now, we are ready to prove Theorem~\ref{thm:diff}.
\begin{align*}
    {COP}_c(A) 
    &= \mathbb{P}[f_c=1|A, \mathcal{G}] \\
    &= \sum_{v\in\{u:id(u)=i\}} \mathbb{P}[v|A, \mathcal{G}] \cdot \mathbb{P}[f_c=1|v, A, \mathcal{G}] \\
    &= \sum_{v\in\{u:id(u)=i\}} m_{td}'[v] \cdot m_{bu}'[v] \\
    &= \sum_{v\in\{u:id(u)=i\}} m_{td}[v] \cdot m_{bu}[v] \\
    &= \sum_{v\in\{u:id(u)=i\}} m_{td}[v] \cdot \sum_{j=0}^k a_{i,j}m_{bu}[v.j]
\end{align*}

Hence we have
\begin{align*}
    \frac{\partial{COP}_c(P)}{\partial p_{i,j}} = \sum_{v\in\{u:id(u)=i\}} m_{td}[v] \cdot m_{bu}[v.j]
\end{align*}
\end{proof}

\subsection{Parallel Decision Diagram Traversal}

The decision diagram can be traversed in parallel.
For example, MDDs can be represented as Listing~\ref{lst:c13} and Listing~\ref{lst:c15}. 

\begin{lstlisting}[language=python, caption={$x_1\wedge x_2 \wedge x_3 \wedge x_4$}, label={lst:c13}]
# vid, nid, eid, cid
1 1 0 5
1 1 1 2
1 1 2 5
2 2 0 5
2 2 1 3
2 2 2 5
3 3 0 5
3 3 1 4
3 3 2 5
4 4 0 5
4 4 1 6
4 4 2 5
0 0 0 0
0 0 0 0
0 0 0 0
\end{lstlisting}

\begin{lstlisting}[language=python, caption={$x_1\wedge (x_2 < x_3) \wedge x_4$}, label={lst:c15}]
# vid, nid, eid, cid
1 1 0 6
1 1 1 2
1 1 2 6
2 2 0 3
2 2 1 4
2 2 2 6
3 3 0 6
3 3 1 5
3 3 2 5
3 4 0 6
3 4 1 6
3 4 2 5
4 5 0 6
4 5 1 7
4 5 2 6
\end{lstlisting}

For every MDD, we call the kernel functions to traverse one edge at a time. 
The corresponding top-down and bottom-up message propagation kernels are shown in Listing~\ref{lst:cuda_forward} and Listing~\ref{lst:cuda_backward}.
To ensure uniform execution across threads, shorter MDD traces are padded with zeros.
The kernel functions are designed to detect such padded entries (e.g., when \texttt{nid} equals zero) and skip message propagation accordingly.

\begin{lstlisting}[language=C++, caption={CUDA-style top-down message propagation kernel function}, label={lst:cuda_forward}]
__global__ void top_down_kernel(
    const int* variable_id, const int* node_id, const int* edge_id, const int* child_id,
    const float* p, const int* pointer_id, float* m_td
) {
    int idx = blockIdx.x * blockDim.x + threadIdx.x;
    if (idx >= num_mdd) return;

    int vid = variable_id[idx];
    if (vid == 0) return;
    int nid = node_id[idx];
    int eid = edge_id[idx];
    int cid = child_id[idx];
    int pid = pointer_id[vid];

    atomicAdd(&m_td[cid], m_td[nid] * p[pid + eid]);
}
\end{lstlisting}

\begin{lstlisting}[language=C++, caption={CUDA-style bottom-up message propagation kernel function}, label={lst:cuda_backward}]
__global__ void bottom_up_kernel(
    const int* variable_id, const int* node_id, const int* edge_id, const int* child_id,
    const float* p, const int* pointer_id, const int num_mdd, float* m_bu 
) {
    int idx = blockIdx.x * blockDim.x + threadIdx.x;
    if (idx >= num_mdd) return;

    int vid = variable_id[idx];
    if (vid == 0) return;
    int nid = node_id[idx];
    int eid = edge_id[idx];
    int cid = child_id[idx];
    int pid = pointer_id[vid];

    atomicAdd(&m_bu[nid], m_bu[cid] * p[pid + eid]);
}
\end{lstlisting}

\end{document}